\newcommand\bs[1]{\mathbf{#1}}
\newcommand\mc[1]{\mathcal{#1}}
\newcommand\bb[1]{\mathbb{#1}}
\newcommand\s[1]{\mathsf{#1}}
\newcommand\gen{\text{gen}}
\theoremstyle{plain}
\newtheorem{assumption}{Assumption}
\theoremstyle{plain}
\newtheorem{lemma}{Lemma}
\theoremstyle{plain}
\newtheorem{proposition}{Proposition}
\theoremstyle{plain}
\newtheorem{remark}{Remark}
\theoremstyle{plain}
\newtheorem{theorem}{Theorem}
\theoremstyle{plain}
\theoremstyle{plain}
\newcommand{\leqnomode}{\tagsleft@true\let\veqno\@@leqno}
\begin{document}
%
\title{AI-in-the-Loop Sensing and Communication Joint Design for Edge Intelligence }
%
%
%

\author{Zhijie~Cai,~\IEEEmembership{Student Member,~IEEE,}
        Xiaowen~Cao,~\IEEEmembership{Member,~IEEE,}
        Xu~Chen,~\IEEEmembership{Member,~IEEE,}
        Yuanhao~Cui,~\IEEEmembership{Member,~IEEE,}
        Guangxu~Zhu,~\IEEEmembership{Member,~IEEE,}
        Kaibin~Huang,~\IEEEmembership{Fellow,~IEEE,}
        Shuguang~Cui,~\IEEEmembership{Fellow,~IEEE}
\thanks{Part of this work has been presented in 2024 IEEE 4th International Symposium on Joint Communications \& Sensing (JC\&S), Leuven, Belgium.}
\thanks{Z. Cai and G. Zhu are with Shenzhen Research Institute of Big Data and the School of Science and Engineering (SSE), The Chinese University of Hong Kong, Shenzhen, Shenzhen 518172, China.}
\thanks{X. Cao is with the College of Electronics and Information Engineering, Shenzhen University, Shenzhen 518060, China.}
\thanks{Y. Cui is with the School of Information and Communication Engineering, Beijing University of Posts and Telecommunications, Beijing 100876, China.}
\thanks{X. Chen and K. Huang are with the Department of Electrical and Electronic Engineering, The University of Hong Kong, Hong Kong.}
\thanks{S. Cui is with the School of Science and Engineering (SSE), the Shenzhen Future Network of Intelligence Institute (FNii-Shenzhen), and the Guangdong Provincial Key Laboratory of Future Networks of Intelligence, The Chinese University of Hong Kong, Shenzhen, Shenzhen 518066, China.}
}

%
%

\markboth{}%
{Shell \MakeLowercase{\textit{et al.}}: Bare Demo of IEEEtran.cls for IEEE Journals}
%



\maketitle
\begin{abstract}

Recent breakthroughs in artificial intelligence (AI), wireless communications, and sensing technologies have accelerated the evolution of edge intelligence. However, conventional systems still grapple with issues such as low communication efficiency, redundant data acquisition, and poor model generalization. To overcome these challenges, we propose an innovative framework that enhances edge intelligence through AI-in-the-loop joint sensing and communication (JSAC). This framework features an AI-driven closed-loop control architecture that jointly optimizes system resources, thereby delivering superior system-level performance. A key contribution of our work is establishing an explicit relationship between validation loss and the system's tunable parameters. This insight enables dynamic reduction of the generalization error through AI-driven closed-loop control. Specifically, for sensing control, we introduce an adaptive data collection strategy based on gradient importance sampling, allowing edge devices to autonomously decide when to terminate data acquisition and how to allocate sample weights based on real-time model feedback. For communication control, drawing inspiration from stochastic gradient Langevin dynamics (SGLD), our joint optimization of transmission power and batch size converts channel and data noise into gradient perturbations that help mitigate overfitting. Experimental evaluations demonstrate that our framework reduces communication energy consumption by up to $77 \%$ and sensing costs measured by the number of collected samples by up to $52 \%$ while significantly improving model generalization—with up to $58 \%$ reductions of the final validation loss. It validates that the proposed scheme can harvest the mutual benefit of AI and JSAC systems by incorporating the model itself into the control loop of the system.


\end{abstract}

\begin{IEEEkeywords}
Artificial intelligence, edge intelligence, joint sensing and communication design, generalizability.
\end{IEEEkeywords}

%
\IEEEpeerreviewmaketitle

\section{Introduction}\label{sec:intro}
%
%
%
%
\IEEEPARstart{E}{xisting} prominent progress in \emph{artificial intelligence} (AI), wireless communication, and sensing have envisioned that the future networks will not only serve the sole purpose of data delivery but also actively and cooperatively collect data then learn from it, providing a large variety of tactile intelligent services and applications such as auto-pilot and metaverse \cite{liu2022integrated}, \cite{cui2021integrating}. The convergence of sensing, communications, and AI is incorporated in the 6G vision by the IMT-2030 group \cite{itu_6g}. The natural integration of these new features at the network edge agrees with the research area of edge intelligence, aiming to realize the synergy of sensing, communication, and AI for end-to-end downstream tasks \cite{liu2025integrated} \cite{chen2023view}, \cite{zhu2023pushing}, \cite{cui2023integrated}. However, the exact mechanisms of how AI models function are unclear, and models are still viewed as ``black boxes". AI-in-the-loop control, explicitly incorporating the AI model itself into the control loop of the learning process, is a possible solution to navigate the model through the learning process while the complex interaction of the model and data remains unrevealed.

Existing designs of edge intelligence are primarily based on \emph{federated learning} (FL) \cite{mcmahan2017communication}, known as \emph{federated edge learning} (FEEL). In a FEEL system, multiple \emph{Joint Sensing and Communication} (JSAC) devices, known as \emph{devices}, will collect their local data and cooperatively train a standard AI model under the coordination of a \emph{server}. 
FEEL guarantees local data privacy by allowing devices to use local models as the media of knowledge exchange rather than raw data. In particular, during training, each device updates its local model using local data and then sends it to the server so that a new global model can be created and broadcast for the next iteration. The design broadens the applicable scenarios of edge intelligence since it potentially allows more data to be put into the training set of an AI model, so the edge devices of a massive number, like mobile phones, can contribute data to the model. 
Nevertheless, there are challenges when translating the concept into reality.

The first challenge is \textbf{inefficient communication}. There is a natural demand for the services the edge intelligence system provides to return results in time. However, a FEEL system's most basic communication design uses orthogonal access, where the server rations bandwidth for heavy communication loads. The situation aggravates with the emergence of large models \cite{zhao2023survey}. Some compromising measures include lossy gradient compression \cite{tao2018esgd} and parameter-efficient fine-tuning \cite{yi2023fedlora}. Still, they are all bid to the fact that edge devices are assigned a reciprocally narrow band, meaning that the FL process will always be slowed down if more edge devices join the process. A technology named \emph{over-the-air federated edge learning} (Air-FEEL) \cite{cao2021optimized}, is a popular solution that enables devices to efficiently aggregate local updated gradient using the superposition property of wireless channels on the whole available frequency band for the task once the models are properly coded. However, devices cannot convey their local updates perfectly to the server due to the channel noise, which poses a problem for noise control and management during the learning process.

Another challenge is \textbf{inefficient data usage}. As is well known, good AI models are usually trained on a tremendously large dataset. In a FEEL system, it translates to a large data sensing and computation demand for the devices. The devices will need to continuously collect new samples for their model update, posing great difficulties since they are usually assumed to be resource-constrained like mobile phones. Moreover, although more data samples will bring more information to the model, researchers have found that some samples are more \emph{important} than others \cite{settles2009active}, implying a data efficiency gap. To improve data efficiency in edge learning systems by picking out the essential data samples, \cite{liu2020data} proposed that scheduling devices with more important samples to upload their models can accelerate model convergence, and \cite{he2020importance} proposed that selecting the data samples with a higher loss for backpropagation can reduce training latency and improve learning accuracy.
Additionally, \cite{katharopoulos2018not} proposed a gradient reweighting-based sampling method to alter the data distribution to one that guarantees maximum convergence speed \cite{richtarik2016optimal, alain2015variance}. However, although fewer samples are used in actual AI model training by applying their methods, neither lowers the sensing cost induced by data acquisition. \cite{katharopoulos2018not} even requires a larger batch to inspect the current gradient distribution. It brings no harm in a centralized learning scenario but could drain the energy in a device in a FEEL system by repetitively requiring more data, leading to worse final performance, which thus calls for a more sophisticated design on data acquisition.

One of the most difficult challenges is \textbf{poor generalizability}. The correctness and reliability of the results are fatal in an edge intelligence system. Although \emph{deep learning} is the most popular and most widely-used form of AI, it suffers from the \emph{overfitting} problem, meaning that AI models are likely to give accurate predictions for seen samples but perform drastically worse for unseen data, even if they are sampled from the same distribution where the training data is from \cite{dietterich1995overfitting}. While normal AI practices can mitigate the issue by accessing more memory \cite{ji2020history}, it is not a viable option for resource-limited edge devices. There are pioneering works investigating how to accelerate model convergence by adjusting some tunable parameters in a FEEL system, including batch sizes, sensing power\cite{liu2022toward}, 
task queueing \cite{zhang2023distributed}, and communication power control \cite{wen2023task}. However, fast convergence does not imply good inference performance. Notably, \cite{wen2023task} used a surrogate called \emph{discriminant gain} to analyze the inference loss (equivalently validation loss in this context, consistent with lowering overfitting effect), but it does not guide the design of the training phase of a FEEL system. Under the centralized setting, one of the well-known approaches to this problem is the loss regularization technique, where penalties are added to the original objective \cite{moradi2020survey}. However, loss regularization usually introduces additional computational consumption. Data augmentation is also an approach to this end by adding noise to the training data and is empirically proven useful in various AI applications \cite{shorten2019survey, feng2021survey, wen2020time}. However, the characterization of its impact on AI performance is also hardly tractable due to the AI models' highly nonlinear nature. There are compromising proposals for realization, each asserting different assumptions on the model or problem itself, making it less possible to rigorously track the relationship between AI performance and its tunable parameters \cite{kukavcka2017regularization}. Meanwhile, there is no work addressing generalizable FEEL training to the best of our knowledge except \cite{cai2024integrated}, which is a prior version of this work.

\begin{figure}
    \centering
    \includegraphics[width=0.7\linewidth]{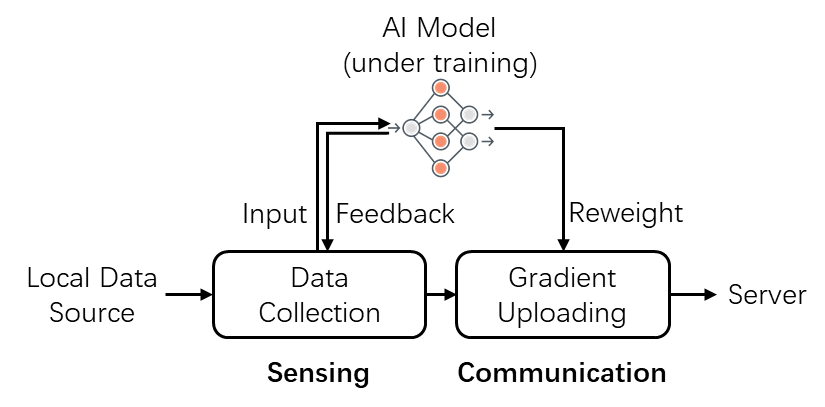}
    \vspace{-10pt}
    \caption{AI-in-the-Loop Sensing and Communication Joint Design}
    \label{fig:ai-in-the-loop}
    \vspace{-10pt}
\end{figure}

Seeing the three challenges that exist on the road to a real edge intelligence system, we try to address one key question in making FEEL a practical application in future networks: How can we guarantee low validation loss as well as fast convergence in a trained edge intelligence system? 
To this aim, a JSAC optimization on resources with smart AI-in-the-loop control under the guarantee of good generalizability\footnote{Generalizability can also be referred to the ability of a generalization model, but we use this term to describe how good a machine learning model fits a dataset consists of samples \emph{independently identical distributed} of that from the training set.} is required.

The crux of the solution is as follows. It is shown that the generalization error relates to a conceptual measurement: the \emph{mutual information} \cite{polyanskiy2014lecture} between the training data and the model weight \cite{bu2020tightening}. Intuitively, if the model weight is less related to the training data, the model generalizes better. So, a naive way to minimize this quantity will lead to a trivial solution: don't train at all. If the model is independent of the dataset, there will be no overfitting, but at the same time, it will be of no use. So we choose the validation loss instead as the objective, which can be decomposed as a summation of the \textbf{training loss} and \textbf{generalization error}, the latter part characterizes the overfitting effect. This encourages us to lower the generalization error by smartly adjusting the distribution of the training data and the model weight using tunable parameters in an edge intelligence system so that the trained model is usable but as independent of the training data as possible. We then formulate an optimization problem toward this end, which entails addressing two subproblems concerning the adjustment of the two distributions, respectively. To solve the subproblem of training data distribution adjustment, we propose a novel implementation of importance sampling that integrates the FEEL model itself into the control loop of the FEEL system. By allowing the devices to actively decide when to stop data sample collection as well as how the data samples should be weighted based on the forwarded gradients provided by the device models themselves, it builds an \textit{AI-in-the-loop} sensing control to optimize the data distribution, as a consequence, lowering the upper bound of the mutual information. To solve the other subproblem of model weight distribution adjustment, inspired by the \emph{Stochastic gradient Langevin dynamics} (SGLD) \cite{Welling2014} through Gaussian channels \cite{xiong2023fundamental}, we inject noise to the uploaded local gradient to alter the model weight distribution, where the noise level can be controlled by two tunable parameters in the system, namely, the transmission power during model update (affecting channel-induced noise), and the size of data batches (affecting data-induced noise). By integrating the two techniques as shown in \Cref{fig:ai-in-the-loop}, we obtain a powerful FEEL training scheme that enjoys both a lower validation loss and a lower cost on sensing and communication.

The findings and contributions of this work are summarized as follows.

\begin{itemize}
    \item \textbf{Explicit Validation Loss Bound:} We derive an explicit upper bound on the validation (or population) loss as a function of the tunable parameters in a JSAC system. This bound directly links key system parameters --- such as sensing batch sizes, transmission power, and gradient noise levels --- to the final model performance, thereby providing a clear guideline for system design.
    \item \textbf{Solution approach via problem decomposition:} Building on the derived bound, we formulate a problem of minimizing the validation loss bound against the tunable parameters. 
    Recognizing its inherent intractability, we decompose the problem into two decoupled subproblems. The first subproblem focuses on optimizing the data distribution by reducing the generalization error via adaptive importance sampling. The second subproblem addresses the weight (or gradient) distribution, emphasizing the control of gradient noise through effective communication resource allocation. This decomposition not only renders the optimization tractable but also lays the groundwork for the subsequent algorithmic design.
    \item \textbf{AI-in-the-loop sensing control:} 
    Motivated by the need to optimize data distribution, we introduce an innovative AI-in-the-loop sensing control protocol. In this approach, edge devices actively decide when to stop data collection by monitoring real-time gradient variance. This adaptive sensing mechanism minimizes sensing overhead while mitigating overfitting, thereby ensuring more effective data sampling.
    \item \textbf{SGLD-inspired gradient noise control:} Complementing the adaptive sensing control, we introduce an SGLD-inspired approach to regulate gradient noise by communication resource allocation optimization. Drawing inspiration from SGLD, our strategy is designed based on a solution to an optimization problem against dynamic transmission power and batch size to inject the right amount of noise in order to minimize the expected validation loss. This careful calibration not only mitigates overfitting through controlled gradient perturbations but also translates into more efficient communication --- reducing energy consumption while mitigating the overfitting effect.
    \item \textbf{Performance evaluation:} We conduct extensive simulations on two case studies under the edge intelligence paradigm. One is the handwritten digits classification task, the other is a human motion classification using wireless sensing with JSAC devices. We validate the advantages of the proposed JSAC resource allocation scheme and the AI-in-the-loop sensing control algorithm that combines into the proposed scheme. Experimental evaluations demonstrate that our framework reduces communication energy consumption by up to $77 \%$ and sensing costs by up to $52 \%$, while significantly improving model generalization—with up to $58 \%$ reductions of the final validation loss.
\end{itemize}

The remainder of this paper will be organized as follows. In \cref{sec:SMPF}, we will introduce the system model and problem formulation. In \cref{sec:PS}, we analyze the objective and split the problem. We will focus on the subproblem of reducing generalization error by importance sampling in \cref{sec:ACLGE} and the expected validation loss before importance sampling by JSAC resource allocation in \cref{sec:RAVLO}. In \cref{sec:ER}, we provide empirical results elaborated on a case study. In \cref{sec:conclusion}, we conclude this paper.

Unless further specified, symbols in the remainder of the paper that appear in Roman fonts will denote a constant or a random variable. Bold faces denote vectors. Subscript $\cdot_k$ and $\cdot_r$ means evaluating the symbol at a specific round indexed $r$ or device indexed $k$. Superscript $\cdot^{(r)}$ means the collection of evaluation of the symbol from round $1$ to $r$, equivalently $\cdot^{(r)} := \{\cdot_1, \dots, \cdot_r\}$. Subscript in sans-serif fonts denotes the task of the symbol, namely, $\cdot_\s{s}, \cdot_\s{cm}$ and $\cdot_\s{cp}$ means that the symbol is used in the sensing, communication, and computation phase, respectively. Sets will be written in calligraphic fonts, e.g., device index set $\mc{K} = \{1, \dots, K\}$. $|\cdot|$ denotes the cardinality of a set, e.g. $|\mathcal{D}_{k, r}|$ denotes the batch size of the batch $\mathcal{D}_{k, r}$, collected by device device indexed $k$ at round $r$.

\section{System Model and Problem Formulation} \label{sec:SMPF}

We consider an edge learning system consisting of an edge server and $K$ JSAC devices operating Air-FedSGD, as shown in \Cref{fig:scenario}. With the coordination of the edge server, the JSAC devices cooperate to carry out the Air-FedSGD algorithm.

\subsection{Federated Edge Learning}\label{subsec:learning_model}

\begin{figure}[t]
  \centering
  \includegraphics[scale=0.4]{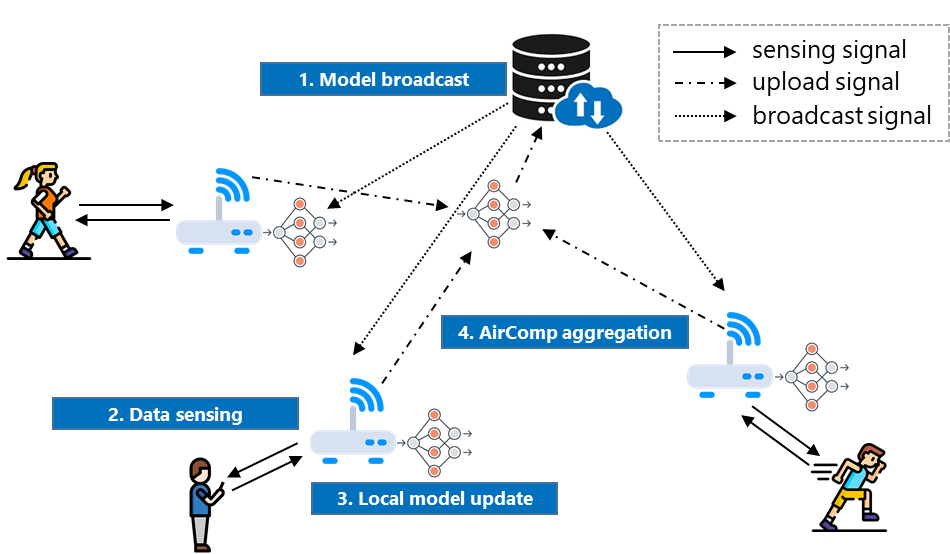}
  \vspace{-10pt}
  \caption{Federated edge learning system with Air-FedSGD empowered integrated sensing And communication.}\label{fig:scenario}
\end{figure}
We consider a classic Air-FedSGD algorithm that trains an AI model at a central server using \emph{stochastic gradient descent} (SGD) and the sensory data collected by the distributed JSAC devices \cite{murshed2021machine}. The FL algorithm executes multiple iterative training rounds until convergence, and in each round, the model weights are updated through the following three steps:
\begin{enumerate}
  \item \textbf{Model broadcasting}: In round $r$, the edge server broadcasts the latest model weights, denoted by $\bs{w}_{r-1}$, via a wireless broadcast channel to every JSAC device. 
  \item \textbf{Local gradient computation}: Upon receiving the global model, each device first collects sensory data with a sensing module and then computes a gradient vector using the fresh data and the received model. Let the data samples at the device $k$ be $\mc{D}_{k, r}$ and the corresponding gradient be $\bs{g}_{k, r} \in \mathbb{R}^{d\times 1}$ with $\bs{g}_{k, r}=\nabla_{\bs{w}} F(\bs{w}_{r-1}, \mc{D}_{k, r})$ with $F(\cdot)$ denotes the loss function. For traceability, we assume homogeneity across edge devices, namely, the devices are endowed with equal capacities of sensing, communication, and computation and share the same model architecture and size.
  \item \textbf{Global model update}: The server aggregates all local gradients into a single gradient using averaging as
  \begin{equation}\label{eq:grad_aggregate}
      \bar{\bs{g}}_r = \frac{1}{K}\sum_{k \in \mc{K}}\bs{g}_{k, r}.
  \end{equation}
  Then, the global model is updated based on an SGLD algorithm\cite{Wang2021, Welling2014}, where the new model weights are given as
  \begin{equation}\label{eq:update_rule}
    \bs{w}_r = \bs{w}_{r-1} - \eta(\bar{\bs{g}}_r+\bs{n}_r),
  \end{equation}
  where $\eta$ denotes the learning rate and $\bs{n}_r$ represents a manually added Gaussian noise with $\bs{n}_r\sim\mathcal{N}(0,\tau_r)$.
\end{enumerate}

\subsection{Sensing Model}\label{subsec:sensing_model}
The JSAC devices leverage sensing algorithms and data sampling, which acquire training datasets in real-time, to support the model training in Air-FedSGD. Specifically, at an arbitrary device, say device $k$, a {\it frequency modulated continuous wave} (FMCW) is emitted by the transmitter to illuminate the object, e.g., a human body, and the echo signal is received for sampling data $\mc{D}_{k, r}$. Consider a primitive-based method \cite{ram2010simulation} that models the scattering of sensing objects as a linear time-varying system. Then, the received echo signal is given as \begin{align}\label{eq:sensing_signal}
  \hat{\bs{x}}(t) = \sqrt{p_{\s{s}, k}}\bs{x}_1(t) + \sqrt{p_{\s{s}, k}}\bs{x}_{-1}(t) + \bs{n}_{\s{s}, k}(t),
\end{align}
where $p_{\s{s},k}$ denotes transmit power, $\bs{x}_1(t)$ denotes the temporal normalized one-hop reflective signal (i.e., transmitted from the transmitter, reflected by the human body, and finally received the receiver), $\bs{x}_{-1}(t)$ denotes the summation of all multi-hop normalized reflective signal, and $\bs{n}_{\s{s}, k}(t)$ the additive sensing noise. Here, the latency resulting from sensing signal propagation is eliminated by focusing on indoor sensing environments with short propagation distances.

Next, training datasets are generated by sampling the captured echo signals $\hat{\bs{x}}(t)$. Without loss of generality, the sampling interval is fixed as $T_0$ across devices and let $a_{k,r}$ be the number of samples for round $r$ at device $k$. The $a_{k,r}$ samples at device $k$ will be further up-sampled for forming the training data batch $\mc{D}_{k, r}$ with a predefined batch size $b_r = |\mc{D}_{k, r}|, \forall k$. As a result, the worst latency resulting from data acquisition is then given as $T_{\s{s}, r} = T_0 b_r$. Besides, both $a_{k,r}$ and $b_r$ vary over training rounds for a dynamically weighted gradient average for improving training performance, as detailed in~\cref{sec:ACLGE}.  

\subsection{Communication Model}\label{subsec:communication_model}
The edge server leverages AirComp, where each parameter in the AI model is coded into amplitudes of electromagnetic waves, the superpositioning of which yields the aggregated model to efficiently aggregate local gradients over a multiple access channel and exploits the channel-induced distortions as the desired noise for the SGLD algorithm in Equation~\ref{eq:update_rule}. We consider a single-input-single-output channel with the \emph{channel state information} (CSI) available for both edge devices and the server for performing channel compensation. Furthermore, time is slotted, and each time slot, with the duration of $T_1$, allows for the transmission of $L$ scalar. We further assume a block-fading channel that keeps unchanged over at least $t = \left\lceil \frac{d}{L} \right\rceil$ time slots for uploading a $d$-dimensional gradient vector. As a result, the communication latency for round-$r$ training is given as $T_\s{cm} = t T_1$.

In round-$r$ gradient aggregation using AirComp, each device transmits simultaneously its update $\{\bs{g}_{k, r}\}$ via the same time-frequency block, leading to the server receiving a signal
\begin{align}
  \hat{\bs{g}}_r &= \sum_{k \in \mc{K}} h_{k, r} \rho_{k,r} \bs{g}_{k, r} + \sqrt{p_n}\bs{n}_{\s{cm}}
\end{align}
where $h_{k, r}$ represents the complex channel coefficient of device $k$, $\rho_{k,r}$ denotes the transmit power control, and $\bs{n}_{\s{cm}} \sim \mc{N}(\bs{0}, \bs{I})$ denotes channel noise with power $p_n$. Following \cite{zhu2019broadband}, channel inversion is adopted to compensate for the channel fading: 
 $\rho_{k,r}=\frac{\sqrt{c_r}}{|h_{k, r}|}$, where $c_r > 0$ is the denoising factor controlling the AirComp noise. Then, $\hat{\bs{g}}_r$ is post-processed through normalization to yield the desired gradient aggregation:
 \begin{align}
  \tilde{\bs{g}}_r = \frac{1}{K \sqrt{c_r}}\hat{\bs{g}}_r
  = \bar{\bs{g}}_r + \sqrt{\frac{p_n}{ c_r}} \bs{n}_{\s{cm},r}, \label{eq:gradient_estimate}
\end{align}
and the induced gradient error is denoted by $\bs{e} = \tilde{\bs{g}}_r - \bar{\bs{g}}_r = \sqrt{\frac{p_n}{ c_r}}\bs{n}_{\s{cm}}$.
Compare this term with the noise in the update rule \eqref{eq:update_rule}, giving $c_r = p_n\eta^2 / \tau_r$.

\subsection{Performance Metric}
Recalling the goal of this paper, we try to lower the validation loss, equivalently, the population loss 
\begin{align}\label{eq:validation_loss}
  \bb{E}_{\mc{D}' \sim \mathscr{D}} F(\bs{w}, \mc{D}'),
\end{align}
rather than the commonly used empirical $F(\bs{w}, \mc{D})$, where $\mathscr{D}$ is defined as the distribution from where dataset $\mc{D}$ is sampled. This quantity tracks the expected loss of a model weight measured over a hypothetical dataset whose samples are i.i.d. with those in the training set, rather than solely on the training set. This quantity can be considered the expected validation loss in practice and takes the overfitting effect of AI models compared to the commonly used empirical loss as the optimization objective as in \cite{liu2022toward, cao2021optimized}.

\subsection{Problem Formulation}
Our objective is to minimize the validation loss in~\eqref{eq:validation_loss}, which is by nature intractable due to the lack of closed-form expressions. To address this issue, we introduce the following objective function
\begin{align}\label{eq:objective}
  J := F(\bs{w}, \mc{D}) +\gen_{\bs{q}}(\bs{w}, \mc{D}) + F^{*},
\end{align}
where $F^*$ denotes the infimum of the loss function value and $\gen_{\bs{q}}(\bs{w}, \mc{D})$ denotes the {\it generalization error} induced by sampling distribution $\boldsymbol{q}$ that is defined as \begin{equation}\label{eq:gen_error}
  \gen_{\bs{q}}(\bs{w}, \mc{D}) = \bb{E}_{\mc{D}' \sim \mathscr{D}} [|F(\bs{w}, \mc{D}') - F_{\bs{q}}(\bs{w}, \mc{D})|].
\end{equation}
It is worth noticing that in order to lower $\text{gen}_\bs{q}(\bs{w}, \mc{D})$ as much as possible, mix weights must be corrected according to the sampling weight $\bs{q}$. Otherwise, there will be induced bias between $F_{\bs{q}}(\bs{w}, \mc{D})$ and $F(\bs{w}, \mc{D}')$. It is in some works known as importance sampling \cite{alain2015variance}, and a detailed discussion will be deferred to later chapters.

Since we assume that the samples in the validation set or inference set are sampled from the same distribution from which those in the training set are sampled, this objective can be regarded as directly corresponding to validation loss or test loss. 
To exploit the collected data on resource-limited devices as much as possible, we assume that the devices passively accept the collected data samples and only actively decide when to stop collection before the resampling. So, instead of modifying $\mc{D}_r$ directly, we denote by $b_r'$ the batch size before sampling and make it one of the optimization variables. The generalization error after the resampling is denoted as $\gen_{\{\bs{q}_r, b_r'\}}(\bs{w}, \mc{D})$.

Seeing the generalization error is tractable by adopting an evaluable upper bound for SGLD, upon which minimizing population loss of a model obtained by SGLD can be achieved by minimizing $J$ under the constraints discussed below.

\subsubsection{Latency Constraint}
Following \cite{liu2022toward}, consider synchronous aggregation and ignore broadcasting latency. The latency of each round is then composed of three parts: sensing latency $T_{\s{s}, r}$, local model uploading latency $T_\s{cm}$, and local computation latency. Therein, using the common computation modeling \cite{mao2016dynamic}, the computation delay, denoted by $T_{\s{cp}, r}$, is computed by $T_{\s{cp}, r} = \frac{b_r\nu}{\phi}$, where $\nu$ denotes the CPU cycles required for executing the computation of a single sample in the local update and $\phi$ is the CPU frequency of the devices. Consequently the overall latency consumed by the round-$r$ training is given as $T_r = T_{\s{cp}, r}+T_\s{cm}+T_{\s{s}, r}$. Finally, we constrain the total latency by \begin{align}\label{c:latency}
  \sum_{r \in \mc{R}}T_r \leq T^{\max}, \tag{C1}
\end{align}
where $T^{\max}$ is a predefined total latency threshold.
\subsubsection{Energy Constraint}
Similarly, energy consumption comprises three parts: data sensing, local computation, and local model uploading, which are discussed below.
For device $k$ at round $r$, based on~\eqref{eq:sensing_signal}, the energy consumption for sensing is $E_{\s{s}, r} = T_0b_rp_{\s{s}}$. Then, according to \cite{mao2016dynamic}, the energy consumption for local computation is $E_{\s{cp}, r} = \theta\nu (\phi_{k, r})^2b_r$, where $\theta$ is effective switched capacitance that depends on the chip architecture of the device.
Next, the energy consumption for local model uploading is calculated as $E_{\s{cm}, k, r} = T_\s{cm}\rho_{k,r} = \frac{t T_1 c_{r}}{|h_{k, r}|^2}$. Finally, the system energy consumption at each round is constrained to not exceed a predefined threshold $E^{\max}$:
\begin{align}
  \max\limits_{k \in \mc{K}} \sum_{r \in \mc{R}} \left(E_{\s{s}, r} + E_{\s{cp}, r} + E_{\s{cm}, k, r}\right) \leq E^{\max}. \tag{C2}
\label{c:energy}\end{align}

\subsubsection{Peak Power Constraints}
Finally, the peak power constraints are adopted for communication and sensing:
\begin{align}
  & 0 < c_r \leq P_{\s{cm}}^{\max}|h_{k, r}|^2, \quad \forall k \in \mc{K}, \tag{C3} \label{eq:C3} \\
  & 0 < P_{\s{s}}^{\min} \leq p_{\s{s}} \leq P_{\s{s}}^{\max}, \quad \forall k \in \mc{K}. \tag{C4}\label{c:power}
\end{align}

Under the three types of constraints specified above, the optimization problem of population loss can be formulated as 
\begin{equation}\tag{P1}\label{P1} \leqnomode
\begin{aligned}
  \min\limits_{\{c_r, b_r, b_r', \bs{q}_r\}} &\quad J \\
  \text{s.t.}&\quad\text{(C1)-(C4)}. 
\end{aligned} 
\end{equation}

Problem~\ref{P1} is a dynamic programming (DP) problem that optimizes tunable parameters, including channel inversion coefficient $c_r$, pre-sampled batch size $b_r'$, post-sampled batch size $b_r$, data batch $\mc{D}_r$, and data resampling weight $\bs{q}_r$ for all training rounds within temporally dynamic channel conditions is complicated. To overcome this issue, we perform theoretical analysis on the validation loss, demonstrating the roundwise decomposition with proper bounding of the validation loss and further transforming Problem~\ref{P1} into tractable time-independent sub-problems with decoupled variables, as detailed in the next section.

\section{Objective Analysis and Problem Decomposition}\label{sec:PS}
The key idea behind the objective analysis is to replace the validation loss with a summation of roundwise error reductions. Specifically, the objective of Problem~\ref{P1} can be rewritten as
\begin{align}\label{P2}
   J &:= \sum_{r \in \mc{R}}\left(\Delta F_r + \Delta G_r\right),
\end{align}
where $\Delta F_r := F(\bs{w}_r, \mc{D}^{(R)}) - F(\bs{w}_{r-1}, \mc{D}^{(R)})$ and $\Delta G_r := \gen_{\{\bs{q}_r, b_r'\}}(\bs{w}_r, \mc{D}^{(R)}) - \gen_{\{\bs{q}_r, b_r'\}}(\bs{w}_{r-1}, \mc{D}^{(R)})$, representing the increment of empirical loss reduction and generalization error at round $r$, respectively. 
\begin{equation}\tag{P2} \label{P2} \leqnomode
\begin{aligned}
  \min\limits_{\{c_r, b_r, \mc{D}_r, \bs{q}_r\}} &\quad \sum_{r \in \mc{R}} \left( \Delta F_r + \Delta G_r \right) \\
  \text{s.t.}&\quad\text{(C1)-(C4)}. 
\end{aligned}
\end{equation}

The analysis is built upon the assumptions commonly adopted in relevant works~\cite{bottou2018optimization, cao2021optimized, liu2022toward}. 

\begin{assumption}[Lipschitz-continuous objective gradients]\label{ass:l_continuous}
  There exists a constant $0 < L < \infty$ such that \begin{align}
    \|\nabla F(\bs{w}) - \nabla F(\bar{\bs{w}})\| \leq L\|\bs{w} - \bar{\bs{w}}\|. \notag
  \end{align}
\end{assumption}

\begin{assumption}[Polyak-Łojaciewicz inequality]\label{ass:plineq}
  There is an optimal loss function value of $F$, denoted by $F^*$, and the optimality gap $\gamma = F(\bs{w}) - F^*$ satisfies \begin{align}
  \|\nabla F(\bs{w})\|_2^2 \geq 2 \delta \gamma, \notag
\end{align}
where $\delta \geq 0$ is a constant.
\end{assumption}

\begin{assumption}[First moment limits of sample gradients]\label{ass:first_moment_lim}
  There exists constants $\mu_F, \mu_G > 0$ such that \begin{align}
    \nabla F(\bs{w}_r)^T\bb{E}[\bs{g}_r] &\geq \mu_F \|\nabla F(\bs{w})\|_2^2, \label{eq:muF}\\
    \nabla F(\bs{w}_r)^T\bb{E}[\bs{g}_r] &\geq \mu_G \bb{E}[\|\bs{g}_r\|^2], \label{eq:muG}
  \end{align}
  where $\bs{g}_r$ is the roundwise gradient estimation, the mean of the samplewise gradient estimation at round $r$.
\end{assumption}

\begin{assumption}[Second moment limits of sample gradients]\label{ass:second_moment_lim}
  There exist constants $M, M_E, M_V > 0$ such that for the samplewise gradient estimation $\bs{g}(\bs{w}_{k,r}, \bs{d})$ at model $\bs{w}_{k,r}$ and data batch $\bs{d}$, it holds that \begin{align}
    \bb{V}[\bs{g}(\bs{w}_{k, r}, \bs{d})] \leq& M_v + M_V \|\nabla{F}(\bs{w}_{r-1})\|_2^2, \notag \\
    \bb{E}[\bs{g}(\bs{w}_{k, r}, \bs{d})]^2 \leq& M_e + M_E \|\nabla{F}(\bs{w}_{r-1})\|_2^2, \notag \\
    \bb{E}[\|\bs{g}(\bs{w}_{k, r}, \bs{d})\|_2^2] \leq& M_e + M_v + (M_E + M_V) \|\nabla{F}(\bs{w}_{r-1})\|_2^2. \notag 
  \end{align}
\end{assumption}

Importantly, \Cref{ass:first_moment_lim} states that in expectation, $\bs{g}_r$ heads close to the direction of $\nabla{F}$. If $\bs{g}_r$ is an unbiased statistic of $\nabla{F}$, the inequality will turn to equality with $\mu_F = \mu_G = 1$. \Cref{ass:second_moment_lim} states that the sample gradients are mildly restricted in terms of norm and are allowed to grow quadratically to the norm of $\nabla F(\bs{w}_{r-1})$.
  Moreover, averaging $\bs{g}(\bs{w}_{k, r}, \bs{d})$ will lead to a drop of the upper bounds, for example, 
  \begin{align}
    \bb{V}[\bs{g}_r] &\leq \frac{M_v}{Kb_r} + \frac{M_V}{Kb_r} \|\nabla F(\bs{w}_{r-1})\|_2^2, \notag \\
    \bb{E}[\|\bs{g}_r\|_2^2] &\leq M_e + \frac{M_v}{Kb_r}+ \left(M_E + \frac{M_V}{Kb_r}\right) \|\nabla F(\bs{w}_{r-1})\|_2^2. \notag
  \end{align}

Therefore, it is verified that the empirical loss is upper bounded by the gradient norm and some tunable parameters.

\begin{proposition}[Upper bounding loss descent by gradient norms]\label{prop:norm_grad}
  Given \Cref{ass:l_continuous}, \ref{ass:first_moment_lim} and \ref{ass:second_moment_lim}, taking  $0 < \eta \leq \frac{\mu_F}{L\left(M_E + \frac{M_V}{Kb_r}\right)}$, it establishes
  \begin{align}\label{eq:bounded_loss}
    &\bb{E}[F(\bs{w}_r)] - F(\bs{w}_{r-1}) \notag \\ 
    \leq&-\frac{\eta \mu_F}{2} \|\nabla F(\bs{w}_{r-1})\|_2^2 + \frac{L \tau_r \eta^2}{2K} + \frac{L \eta^2}{2} \left(M_e + \frac{M_v}{Kb_r}\right) \notag \\
    =&\Delta \bar{F}_r.
  \end{align}
\end{proposition}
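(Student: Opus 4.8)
The plan is to run the classical $L$-smoothness descent argument one round at a time, using the first-moment bound \eqref{eq:muF} to sign-control the linear term and the averaged second-moment bound to size-control the quadratic term, and then to invoke the prescribed step size so that the leftover positive multiple of $\|\nabla F(\bs{w}_{r-1})\|_2^2$ is swallowed by the negative one.

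First I would start from the quadratic upper bound implied by \Cref{ass:l_continuous},
\[
  F(\bs{w}_r) \leq F(\bs{w}_{r-1}) + \nabla F(\bs{w}_{r-1})^T (\bs{w}_r - \bs{w}_{r-1}) + \frac{L}{2}\|\bs{w}_r - \bs{w}_{r-1}\|_2^2,
\]
and substitute the SGLD update $\bs{w}_r - \bs{w}_{r-1} = -\eta(\bar{\bs{g}}_r + \bs{n}_r)$ from \eqref{eq:update_rule}. Taking the conditional expectation given $\bs{w}_{r-1}$ and using that the injected noise $\bs{n}_r$ is zero-mean and independent of the gradient estimate, both cross terms carrying $\bs{n}_r$ drop out: the linear cross term because $\nabla F(\bs{w}_{r-1})^T\bb{E}[\bs{n}_r] = 0$, and the quadratic cross term because $\bb{E}[\bar{\bs{g}}_r^T\bs{n}_r] = 0$. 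What remains is a linear part $-\eta\,\nabla F(\bs{w}_{r-1})^T\bb{E}[\bar{\bs{g}}_r]$ and a quadratic part $\tfrac{L\eta^2}{2}\bigl(\bb{E}[\|\bar{\bs{g}}_r\|_2^2] + \bb{E}[\|\bs{n}_r\|_2^2]\bigr)$, where the noise energy supplies the term $\tau_r/K$ after accounting for the $1/K$ normalization inherent in the $K$-device AirComp aggregation.

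Next I would bound the two surviving parts. For the linear part, \eqref{eq:muF} gives $\nabla F(\bs{w}_{r-1})^T\bb{E}[\bar{\bs{g}}_r] \geq \mu_F\|\nabla F(\bs{w}_{r-1})\|_2^2$, so it is at most $-\eta\mu_F\|\nabla F(\bs{w}_{r-1})\|_2^2$. For the quadratic part I would plug in the averaged second-moment estimate that \Cref{ass:second_moment_lim} yields for $\bar{\bs{g}}_r$, namely
\[
  \bb{E}[\|\bar{\bs{g}}_r\|_2^2] \leq M_e + \frac{M_v}{Kb_r} + \Bigl(M_E + \frac{M_V}{Kb_r}\Bigr)\|\nabla F(\bs{w}_{r-1})\|_2^2.
\]
Collecting everything, the coefficient of $\|\nabla F(\bs{w}_{r-1})\|_2^2$ becomes $-\eta\mu_F + \tfrac{L\eta^2}{2}\bigl(M_E + \tfrac{M_V}{Kb_r}\bigr)$, while the remaining constants separate cleanly into the noise-driven $\tfrac{L\tau_r\eta^2}{2K}$ and the sampling-driven $\tfrac{L\eta^2}{2}(M_e + \tfrac{M_v}{Kb_r})$.

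The decisive step is then to render the gradient-norm coefficient suitably negative. The stated restriction $\eta \leq \frac{\mu_F}{L(M_E + M_V/(Kb_r))}$ is calibrated precisely so that $\tfrac{L\eta}{2}\bigl(M_E + \tfrac{M_V}{Kb_r}\bigr) \leq \tfrac{\mu_F}{2}$, whence the coefficient is at most $-\tfrac{\eta\mu_F}{2}$; substituting this back reproduces $\Delta\bar{F}_r$ verbatim. I do not expect any conceptual obstacle here---the argument is a standard nonconvex SGD descent bound---so the only real care is bookkeeping: tracking the $1/(Kb_r)$ shrinkage that averaging $K$ device gradients over batches of size $b_r$ induces in the second moments, and correctly charging the injected-noise energy to the $\tau_r/K$ term.
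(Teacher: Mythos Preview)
Your proposal is correct and follows essentially the same route as the paper's proof: start from the $L$-smoothness quadratic upper bound, substitute the noisy update, use \eqref{eq:muF} on the linear term and the averaged second-moment bound from \Cref{ass:second_moment_lim} on the quadratic term, then invoke the step-size restriction to absorb the positive $\|\nabla F(\bs{w}_{r-1})\|_2^2$ contribution into the negative one. The paper's write-up differs only cosmetically (it derives the descent lemma via the integral remainder form rather than quoting it), and your bookkeeping of the $1/(Kb_r)$ averaging and the $\tau_r/K$ noise energy matches the paper's.
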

\begin{proof}
    See \Cref{app:norm_grad}.
\end{proof}
According to \Cref{prop:norm_grad}, the upper bound of empirical loss reduction of each training round is irrelevant from both batch population $\mc{D}_r$ and the importance associated with each sample $\bs{q}_r$. 
{\color{black}This is because the importance sampling technique is unbiased, so choices of $\bs{q}$ made no impact on $\bb{E}[\bs{g}_r]$, so the empirical bounds rely solely on the unbiasedness assumption. In contrast, the batch population and sample importance can be altered with the unbiasedness assumption preserved.} This suggests that Problem~\ref{P2} can written as \begin{equation}
\begin{aligned}
    \min\limits_{\{b_r, c_r\}}\sum_{r \in \mc{R}}& \left( \Delta F_r + \Delta \hat{G}_r \right) + \min\limits_{\{b_r', \bs{q}_r\}}\sum_{r \in \mc{R}} \left(\Delta G_r - \Delta \hat{G}_r \right) \\
    \text{s.t.} &\quad \text{(\ref{c:latency})-(\ref{c:power})},
\end{aligned}\notag\end{equation}
where $\Delta \hat{G_r}$ denotes the roundwise generalization error increment before importance sampling, and the second summation refers to the difference of this quantity brought by the importance sampling in the whole training phase.

Since $b_r'$ and $\bs{q}_r$ do not enter the constraints directly, we split the problem into two subproblems and separately optimize them, with the first subproblem 
focused on reducing the generalization error through data distribution optimization by importance sampling:
\begin{equation}\leqnomode\tag{P3'}\label{P3'}
\begin{aligned}
    \min\limits_{\{b_r', \bs{q}_r\}}\sum_{r \in \mc{R}} \left(\Delta G_r - \Delta \hat{G}_r \right),
\end{aligned}\notag\end{equation}
and the second centered on minimizing the validation loss through weight distribution optimization by (gradient) noise control:
\begin{equation}\leqnomode\tag{P4}\label{P4}
\begin{aligned}
    \min\limits_{\{b_r, c_r\}}\sum_{r \in \mc{R}}& \left( \Delta F_r + \Delta \hat{G}_r \right) \\
    \text{s.t.} &\quad \text{(\ref{c:latency})-(\ref{c:power})}. 
\end{aligned}\notag\end{equation}

To assure the unbiasedness and efficiency of variance reduction, we use a version of importance sampling as in \cite{alain2015variance}, which focuses only on the current data batch. So we target an approximation of Problem~\ref{P3'} as follows, \begin{equation}\leqnomode\tag{P3}\label{P3}
\begin{aligned}
    \sum_{r \in \mc{R}} \min\limits_{\{b_r', \bs{q}_r\}}\left(\Delta G_r - \Delta \hat{G}_r \right).
\end{aligned}\end{equation}
It deserves mentioning that we relax the causality of iterative training in Problem~\ref{P3'}. Hence, the solution is sub-optimal. However, it still shows superiority against the benchmarks.
{\color{black} This is an often employed technique in research fields like black gradient descent \cite{beck2013convergence, bradley2011parallel}.} 

Therefore, a solution for~\ref{P2} can be obtained by solving Problem~\ref{P3} and~\ref{P4}, detailed in Section~\ref{sec:ACLGE} and Section~\ref{sec:RAVLO}, respectively.


\section{AI-in-the-loop Sensing Control for Generalization Error Reduction} \label{sec:ACLGE}


In this section, we target Problem~\ref{P3}. We first demonstrate through theoretical analysis that generalization error relates to the gradient variance during the model training phase while the latter is controllable through \emph{Importance Sampling}. Then, a simple yet
effective AI-in-the-loop algorithm to optimize the increment brought by the importance sampling of roundwise generalization error increment $\Delta G_r - \Delta \hat{G}_r$ is proposed based on this observation. 

\subsection{Generalization Error Analysis}

The generalization error is analyzed under a mild and reasonable assumption similar to \cite{Wang2021} that considers the loss function to reveal the following sub-Gaussian property.

\begin{assumption}[Sub-Gaussian objective]\label{ass:subgaussian}
  Samplewise loss function $f(\bs{w}, \bs{d})$ at any model $\bs{w}$ measured on any data sample $\bs{d}$ is $\sigma$-sub-Gaussian, which mean that for all $t \in \bb{R}$, \begin{align}
    \log \bb{E}[\exp(t(f(\bs{w}, \bs{d}) - \bb{E}[f(\bs{w}, \bs{d})]))] \leq \frac{\sigma^2 t^2}{2}. \notag
  \end{align}
\end{assumption}

In particular, if a function $f$ is bounded between two constants $a$ and $b$, this assumption is naturally satisfied by setting $\sigma = (b - a) / 2$. Then, given $f$ is $\sigma$-sub-Gaussian, we obtain the following conclusion.

\begin{lemma}\label{lm:gen_error}
  Denote by $W_r, D_r$ the random variable of the model and the data batch at round $r$, respectively, with a joint probability $P_{W_{r-1}, D_r}$. Then the bounded generalization error under \Cref{ass:subgaussian} is given by:
  \begin{align}
    \Delta G_r \leq \sqrt{\frac{2 \sigma^2}{B} I(W_r; D_r)} \leq \frac{\sigma \eta}{K B} \sum_{r'=1}^r \sqrt{\frac{\bb{V}[\bs{g}_{k, r', i}]}{K \tau_{r'} b_{r'}}}, \label{eq:gen_error}
  \end{align}
  where $B = \sum_{r=1}^R b_{k, r}$ and $\bb{V}[\bs{g}_{k, r', i}] = \bb{E}[\|\bs{g}_{k, r', i} - \bb{E}[\bs{g}_{k, r', i}]\|_2^2]$ is the gradient variance, with the expectations taken over the randomness $(\bs{w}_{r-1}, \mc{D}_{k, r}) \sim P_{W_{r-1}, D_{k, r}}$.
\end{lemma}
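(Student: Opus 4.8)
The plan is to split the statement into its two inequalities and treat them as two separate information-theoretic steps. The first inequality is the sub-Gaussian mutual-information generalization bound, which controls the round-$r$ generalization term $\Delta G_r$ by the mutual information $I(W_r; D_r)$ between the current weight and the data that produced it. The second inequality then bounds this mutual information by the gradient variance that leaks through the noisy SGLD update \eqref{eq:update_rule}, exploiting that the manually injected Gaussian noise $\bs{n}_r$ acts as a capacity-limiting channel.

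For $\Delta G_r \le \sqrt{2\sigma^2 I(W_r;D_r)/B}$ I would apply the information-theoretic generalization bound of \cite{bu2020tightening}. Under \Cref{ass:subgaussian} the samplewise loss is $\sigma$-sub-Gaussian, hence the empirical loss averaged over the $B = \sum_r b_{k,r}$ training samples is $\sigma/\sqrt{B}$-sub-Gaussian; feeding this variance proxy into the Donsker--Varadhan variational characterization of the relative entropy between the joint law $P_{W_r,D_r}$ and the product $P_{W_r}\otimes P_{D_r}$ yields $|\bb{E}[F(W_r,\mc{D}') - F_{\bs{q}}(W_r,\mc{D})]| \le \sqrt{2\sigma^2 I(W_r;D_r)/B}$. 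The absolute value in the definition \eqref{eq:gen_error} is handled by applying the estimate to both tails, and the importance-sampling correction being unbiased means $F_{\bs{q}}$ introduces no additional bias into the population loss.

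For the second inequality I would use the Markov structure of the iterates generated by \eqref{eq:update_rule}. Since $\bs{w}_{r'}$ is obtained from $\bs{w}_{r'-1}$ and the fresh batch through a noisy gradient step, $W_0 \to W_1 \to \cdots \to W_r$ is a Markov chain, so the chain rule and data-processing inequality give $I(W_r;D_r) \le \sum_{r'=1}^r I(W_{r'};D_{r'}\mid W_{r'-1})$. Conditioned on $\bs{w}_{r'-1}$, each step is a Gaussian channel whose ``signal'' $-\eta\bar{\bs{g}}_{r'}$ carries the data dependence and whose noise $-\eta\bs{n}_{r'}$ with $\bs{n}_{r'}\sim\mc{N}(0,\tau_{r'})$ limits the information flow, as in the SGLD analyses of \cite{Wang2021, xiong2023fundamental}. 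Bounding the channel mutual information by its signal-to-noise ratio via $I(W_{r'};D_{r'}\mid W_{r'-1}) \le \tfrac12\log\det(I + \tfrac{1}{\tau_{r'}}\operatorname{Cov}(\bar{\bs{g}}_{r'})) \le \tfrac{1}{2\tau_{r'}}\bb{V}[\bar{\bs{g}}_{r'}]$ and substituting $\bb{V}[\bar{\bs{g}}_{r'}] = \bb{V}[\bs{g}_{k,r',i}]/(Kb_{r'})$ --- the reduction from averaging the $Kb_{r'}$ samplewise gradients in \eqref{eq:grad_aggregate} --- produces a per-round term proportional to $\bb{V}[\bs{g}_{k,r',i}]/(K\tau_{r'}b_{r'})$.

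The last step converts the resulting ``square root of a sum'' into the displayed ``sum of square roots'' by the elementary subadditivity $\sqrt{\sum_{r'} a_{r'}} \le \sum_{r'}\sqrt{a_{r'}}$ for nonnegative $a_{r'}$, which pulls the summation outside the radical and, after collecting the constants $\sigma$, $\eta$, $K$ and $B$, gives the stated right-hand side. I expect the second inequality to be the main obstacle: rigorously justifying the Gaussian-channel estimate (the $\log\det \le \operatorname{tr}$ relaxation and the reduction of $\operatorname{Cov}(\bar{\bs{g}}_{r'})$ to the scalar variance $\bb{V}[\bar{\bs{g}}_{r'}]$), confirming that the per-round batches are independent enough for the chain-rule decomposition, and --- most error-prone --- bookkeeping the factors of $\eta$, $K$ and $B$ to land exactly on the prefactor $\frac{\sigma\eta}{KB}$, which is sensitive to the precise normalization conventions in \eqref{eq:update_rule} and \eqref{eq:gradient_estimate}.
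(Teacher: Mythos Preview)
Your proposal is correct and follows essentially the same two-step route as the paper: the Donsker--Varadhan sub-Gaussian bound for $\Delta G_r \le \sqrt{2\sigma^2 I(W_r;D_r)/B}$, then the chain rule plus data-processing inequality to telescope $I(W_r;D^{(r)})$ into per-round conditional mutual informations, each bounded as a Gaussian channel with signal $\eta\bar{\bs g}_{r'}$ and noise variance $\tau_{r'}$, followed by the subadditivity $\sqrt{\sum a_{r'}}\le\sum\sqrt{a_{r'}}$. The only cosmetic difference is that the paper bounds the per-round term via the KL identity $I(t(X)+N;X)\le \tfrac12\bb{E}\|t(X)\|_2^2$ after an affine recentering rather than your $\log\det\le\operatorname{tr}$ relaxation, and conditions on the full history $(W^{(r'-1)},D^{(r'-1)})$ rather than just $W_{r'-1}$; both yield the same variance-over-noise ratio.
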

\begin{proof}
    See \Cref{app:var_grad}. More details can be found in Corollary 1 from \cite{Wang2021}.
\end{proof}

{\color{black}
\begin{remark}
\label{rmk:ge}
From Lemma \ref{lm:gen_error}, it is observed that if the model is more informative about the used data, the generalization error will be more significant. Moreover, the informativeness can be bounded by a term related to the variance of the gradient estimates. In other words, if $I(W_r; D_r)$ is small, indicating that the model is indifferent to the varying data at a certain model, it implies that the model is less biased by the training data and possesses better generalizability, as empirically verified in \cite{jiang2019fantastic}.
\end{remark}
}

\subsection{Variance Reduction by Importance Sampling}

To reduce the gradient variance, and hence, $\Delta G_r$, we exploit an importance sampling method as discussed in the sequel. As discussed in the {\color{black} Remark }{\color{black}\ref{rmk:ge}}, in each training round, it is intended to compute the exact expectation of gradient $\bb{E}[\bs{g}]$ for model updating so as to reduce the generalization error. This process is hindered by prohibitive computational overhead, leading to the stochastic approximation of gradients: 
\begin{align}
    \bb{E}[\bs{g}] \approx \bb{E}_{p_i}[\bs{g}_i], \notag
\end{align}
where we only need to compute the approximation $\bb{E}_{p_i}[\bs{g}_i]$ by first sampling some finite samples $\bs{g}_i$ according to probability distribution $p_i$ and average it. 

To improve the quality of the estimation, a viable approach is to reweight the samples by another distribution $q_i$. As is shown in \eqref{eq:gen_error}, after the reweighting, an adjusting term $p_i / q_i$ should be attached to $\bs{g}_i$ to ensure this is an unbiased estimation to lower generalization error as much as possible, leading to 
\begin{equation}\label{eq:optimal_sampling}
    \bb{E}[\bs{g}] \approx\bb{E}_{q_i}\left[\frac{p_i}{q_i}\bs{g}_i\right].
\end{equation}
Note that improving the performance of the final model by only adjusting the distribution is possible in existing works\cite{zhao2015stochastic, needell2014stochastic}. An effective choice of the distribution to reduce the variance of the estimation is setting $q_i \propto \|\bs{g}_i\|_2$ \cite{alain2015variance}, as presented in the following lemma.

\begin{lemma}[Maximal variance reduction\cite{alain2015variance}]\label{lm:varred}
Assume the original data distribution is uniform, specifically, $p_i = u_i = 1 / b$. Denote the adjusted sample variance operator by $\bb{S}$, and then we have \begin{align}
    \bb{S}_{u}^b(x_i) = \frac{1}{b - 1} \sum_{i=1}^{b - 1} (x_i - \bb{E}_{u}^b[x_i]) ^ 2. \notag
\end{align}We can achieve maximal variance reduction $v$ as a function of the number of actual batch size $b$ with \begin{align}
    v(b) =& \bb{V}_u^b[\{\bs{g}_{k, r, i}\}_{i=1}^b] - \bb{V}_q^b[\{\bs{g}_{k, r, i}\}_{i=1}^b] \notag \\ 
    \leq & \left(\bb{E}_u[\|\bs{g}_{k, r, i}\|_2^2] - (\bb{E}_u[\|\bs{g}_{k, r, i}\|_2\right) \notag \\
    = & \bb{S}_u^b[\|\bs{g}_{k, r, i}\|_2], \notag
\end{align}
and the superscript on the operator $\bb{E}, \bb{V}$ or $\bb{S}$ denotes the quantity that is computed on $b$ samples sampled against the original probability $u$. 
\end{lemma}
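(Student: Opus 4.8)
The plan is to treat $v(b)$ as the outcome of an explicit variance minimization over the sampling distribution $\bs{q}$ and to pin down the minimizer by Cauchy--Schwarz. First I would set up the importance-weighted estimator that draws one index $i$ from the $b$-sample batch according to $\bs{q}$ and returns $\hat{\bs{g}}_q = \frac{p_i}{q_i}\bs{g}_{k,r,i}$, and record that it is unbiased: $\bb{E}_q[\hat{\bs{g}}_q] = \sum_i q_i \frac{p_i}{q_i}\bs{g}_{k,r,i} = \sum_i p_i \bs{g}_{k,r,i} = \bb{E}_u[\bs{g}_{k,r,i}]$, so the mean is identical under every admissible $\bs{q}$ (this is precisely the job of the correction factor $p_i/q_i$ flagged in \eqref{eq:optimal_sampling}). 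Since the total variance is $\bb{V} = \bb{E}[\|\cdot\|_2^2] - \|\bb{E}[\cdot]\|_2^2$ and the second term is $\bs{q}$-independent, the whole problem collapses to minimizing the second moment $\bb{E}_q[\|\hat{\bs{g}}_q\|_2^2] = \sum_i q_i \frac{p_i^2}{q_i^2}\|\bs{g}_{k,r,i}\|_2^2 = \sum_i \frac{p_i^2}{q_i}\|\bs{g}_{k,r,i}\|_2^2$ over the simplex $\{\bs{q} : \sum_i q_i = 1,\ q_i > 0\}$.

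Second, I would carry out that minimization. Applying Cauchy--Schwarz to the pairing $\big(\frac{p_i\|\bs{g}_{k,r,i}\|_2}{\sqrt{q_i}}\big)_i$ and $(\sqrt{q_i})_i$ gives $\big(\sum_i p_i\|\bs{g}_{k,r,i}\|_2\big)^2 \le \big(\sum_i \frac{p_i^2}{q_i}\|\bs{g}_{k,r,i}\|_2^2\big)\big(\sum_i q_i\big)$, i.e. the second moment is bounded below by $\big(\bb{E}_u[\|\bs{g}_{k,r,i}\|_2]\big)^2$, with equality exactly when $\sqrt{q_i} \propto \frac{p_i\|\bs{g}_{k,r,i}\|_2}{\sqrt{q_i}}$, that is $q_i \propto p_i\|\bs{g}_{k,r,i}\|_2$. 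Under the uniform prior $p_i = u_i = 1/b$ this reduces to the claimed rule $q_i \propto \|\bs{g}_{k,r,i}\|_2$, confirming the choice asserted just before the lemma.

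Third, I would assemble the reduction. With the mean terms cancelling, $\bb{V}_u^b = \bb{E}_u[\|\bs{g}_{k,r,i}\|_2^2] - \|\bb{E}_u[\bs{g}_{k,r,i}]\|_2^2$ and the optimal $\bb{V}_q^b = \big(\bb{E}_u[\|\bs{g}_{k,r,i}\|_2]\big)^2 - \|\bb{E}_u[\bs{g}_{k,r,i}]\|_2^2$, so subtracting yields $v(b) = \bb{E}_u[\|\bs{g}_{k,r,i}\|_2^2] - \big(\bb{E}_u[\|\bs{g}_{k,r,i}\|_2]\big)^2$, exactly the variance of the gradient norms. Any non-optimal $\bs{q}$ gives a larger $\bb{V}_q^b$ and hence a strictly smaller reduction, which is the source of the inequality in the statement.

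The main obstacle I anticipate is bookkeeping rather than a deep idea. I must be careful that $\bb{V}_q^b$ is the variance of the reweighted quantity $\frac{p_i}{q_i}\bs{g}$ and not of $\bs{g}$ itself, so that unbiasedness correctly cancels the $\|\bb{E}_u[\bs{g}_{k,r,i}]\|_2^2$ terms across the two variances; and I must reconcile the population-level identity produced by Cauchy--Schwarz with the finite-sample normalization in the definition of $\bb{S}_u^b$. The latter is in fact what upgrades the exact equality at the optimal $\bs{q}$ to the stated $\le$: the analysis produces the $\tfrac{1}{b}$-normalized population variance of the norms, whereas $\bb{S}_u^b$ carries the larger $\tfrac{1}{b-1}$ Bessel factor, so $v(b)$ is genuinely $\le \bb{S}_u^b[\|\bs{g}_{k,r,i}\|_2]$. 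I would also state explicitly that ``variance'' here is the scalar total variance $\bb{E}[\|\cdot - \bb{E}[\cdot]\|_2^2]$, consistent with $\bb{V}[\bs{g}_{k,r',i}]$ in \Cref{lm:gen_error}, so that the single scalar objective and the Cauchy--Schwarz argument apply with the coordinates aggregated, and note that restricting to the current batch (as flagged before Problem~\ref{P3}) is what keeps this a per-round statement.
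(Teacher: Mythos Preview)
Your argument is correct and is precisely the standard derivation: unbiasedness makes the mean term drop out, Cauchy--Schwarz (equivalently, a Lagrangian on the simplex) identifies $q_i\propto\|\bs{g}_{k,r,i}\|_2$ as the minimizer of the second moment, and subtracting gives the (biased) sample variance of the norms, which is dominated by $\bb{S}_u^b$ via the $b/(b-1)$ factor. The paper itself does not supply a proof but simply points to the supplementary material of \cite{katharopoulos2018not}, whose argument is exactly the one you have reconstructed, so your proposal matches the intended proof.
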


\begin{proof}
    See Supplementary in \cite{katharopoulos2018not}.
\end{proof}

Therefore, consider a fixed value of $\bb{V}_u[\{\bs{g}_{k, r, i}\}_{i=1}^b]$, one can minimize the generalization error by maximizing $\bb{S}_u^b[\|\bs{g}_{k, r, i}\|_2]$ with proper importance assignments in~\eqref{eq:optimal_sampling}.

Another interpretation for the gain of importance sampling is provided additionally. Consider a general expression for AI model training as an iterative process \begin{align}
    P_{W_r} = P_{D_{r}} P_{W_r | W_{r-1}, D_{r}}, \notag
\end{align}
where $D_r$ and $W_r$ are random variables for the newly acquired data batch to be fed into the AI model\footnote{We consider the algorithms that model only updates on newly acquired data batch, which is a general practice.}, and the model at $r$-th step, respectively. The training algorithm is then a conditional probability that outputs $W_r$ given previous round model $W_{r-1}$ and data batches $D_r$ at this round. From the expression, it is observed that the final model depends on two factors (distributions), the data sampling $P_{D_r}$ and the training algorithm $P_{W_r | W_{r-1}, D_r}$. Hence, apart from finding a ``better" algorithm, another way to improve the performance of the final AI model is to optimize data sampling distribution. In other words, ``important" samples should be endowed with a higher probability of being fed into the model. 

\subsection{AI-in-the-loop Sensing Control Protocol for Importance Sampling}

\Cref{lm:varred} above indicates that the variance reduction by importance sampling equivalently maximizes the sample variance of gradient norms from the original batch sampled against uniform sampling. However, in previous works like \cite{katharopoulos2018not}, using this property requires a larger batch first to inspect the gradient norm distribution, then downsample to a smaller batch. This process will cause additional sensing overhead in our setting since part of the computed gradients will not be used in gradient descent due to downsampling.

To avoid excessive sensing cost, upsampling is adopted to achieve importance sampling while ensuring equal sizes of the final batch. Specifically, we will start from a smaller batch with size $\underline{b}$, gradually adding newly collected samples to the batch, and observe the expected $v(\bar{b})$ after we upsample the current batch using the optimal sampling rule \eqref{eq:optimal_sampling} to a fixed batch size $\bar{b}$, and until the variance reduction reaches satisfaction. With regards to \Cref{lm:varred}, it is observed that the quantity of variance reduction $v_b$ is connected to the number of samples in a batch. It motivates us to investigate the pattern of how the variance reduction $v(\bar{b})$ varies against $b$. 

\begin{proposition}[Mean and variance of possible variance reduction]\label{prop:var_svar}
    Taking $\{\|g\|_i\}$ as realizations of the same random variable, with its $k$-th central momentum $\mu_{k}$, when $\mu_{k}$ exists and are finite at $k = 1, 2, 4$, we have \begin{align}
        \mu_{v_{\bar{b}}, 1} = \frac{b-1}{\bar{b}} \mu_{2}, 
        \quad \mu_{v_{\bar{b}}, 2} = \frac{\mu_{4}}{\bar{b}} - \frac{(b-3)\mu_{2}^2}{(b-1)\bar{b}} , \notag 
    \end{align}
    where $\mu_{v_{\bar{b}}, 1}$ and $\mu_{v_{\bar{b}}, 2}$ denotes the first and second momentum, namely, the mean and variance of the variance reduction after upsampling to batch size $\bar{b}$.
\end{proposition}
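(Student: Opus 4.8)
The plan is to read $v_{\bar b}$, the attainable variance reduction from upsampling a freshly collected batch of size $b$ to the target size $\bar b$, as a scaled \emph{sample variance} of the gradient norms. Writing $x_i := \|\bs g_{k,r,i}\|_2$ and treating $x_1,\dots,x_b$ as i.i.d.\ copies of a random variable with finite central moments $\mu_2$ and $\mu_4$, \Cref{lm:varred} equates the variance reduction with the sample variance of the $x_i$; accounting for the resampling to $\bar b$ points amounts to normalizing by $\bar b$, so that $v_{\bar b} = \tfrac{1}{\bar b}\sum_{i=1}^{b}(x_i-\bar x)^2$ with $\bar x=\tfrac{1}{b}\sum_i x_i$. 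The proposition is then nothing but the first two moments of this statistic expressed through $\mu_2,\mu_4$, and I would compute them separately.

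For the mean, I would first note that the summand is invariant under a global shift of the $x_i$, so I may assume $\bb E[x_i]=0$ and identify raw with central moments. Writing $\sum_i(x_i-\bar x)^2=\sum_i x_i^2-b\bar x^2$ and taking expectations term by term via $\bb E[x_i^2]=\mu_2$ and $\bb E[\bar x^2]=\mu_2/b$ gives $\bb E[\sum_i(x_i-\bar x)^2]=(b-1)\mu_2$; dividing by $\bar b$ yields $\mu_{v_{\bar b},1}=\tfrac{b-1}{\bar b}\mu_2$.

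The substantive step is the second moment. Keeping $\bb E[x_i]=0$, set $A:=\sum_i x_i^2$ and $C:=\tfrac{1}{b}(\sum_i x_i)^2$, so that $\sum_i(x_i-\bar x)^2=A-C$ and $\bb E[(A-C)^2]=\bb E[A^2]-2\bb E[AC]+\bb E[C^2]$. Each factor is a fourth-order moment of a sum of independent, zero-mean variables, which I would evaluate by partitioning the index quadruples into the \emph{all-equal} class (weight $\mu_4$) and the \emph{two-distinct-pairs} class (weight $\mu_2^2$), discarding every term that carries an unpaired factor $x_i$. This bookkeeping produces $\bb E[A^2]=b\mu_4+b(b-1)\mu_2^2$, $\bb E[AC]=\mu_4+(b-1)\mu_2^2$, and $\bb E[C^2]=\tfrac{\mu_4}{b}+\tfrac{3(b-1)}{b}\mu_2^2$, where the factor $3$ in the last term is the number of pairings of four indices into two pairs. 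Subtracting $(\bb E[A-C])^2=(b-1)^2\mu_2^2$ collapses the $\mu_2^2$ contributions to the signature $-(b-1)(b-3)/b$ coefficient, recovering the classical variance-of-sample-variance $\tfrac{1}{b}\big(\mu_4-\tfrac{b-3}{b-1}\mu_2^2\big)$; replacing the leading averaging factor $1/b$ by $1/\bar b$ to reflect the upsampling to $\bar b$ effective points then gives the claimed $\mu_{v_{\bar b},2}=\tfrac{\mu_4}{\bar b}-\tfrac{(b-3)\mu_2^2}{(b-1)\bar b}$.

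The main obstacle is exactly this fourth-moment enumeration: tracking which index coincidences survive the zero-mean reduction, and in particular getting the combinatorial $3$ in $\bb E[(\sum_i x_i)^4]$ and the delicate cancellation that leaves the $(b-3)$ factor rather than $(b-1)$ or $(b+1)$. To keep this controlled I would either organize the expansion by the partition type of $(i,j,k,l)$, or---cleaner still---write the unbiased sample variance as the degree-two U-statistic $S^2=\tfrac{1}{2b(b-1)}\sum_{i\neq j}(x_i-x_j)^2$ and invoke the standard U-statistic variance decomposition in terms of its projection variances $\zeta_1,\zeta_2$; both routes terminate at the same expression up to the normalization by the target batch size $\bar b$.
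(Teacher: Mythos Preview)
Your approach is the same as the paper's: observe that upsampling rescales the sample variance of the gradient norms by a $b/\bar b$ factor, then read off the mean and variance of that scaled sample variance from the classical moment formulas. The paper is in fact briefer than you are---it states the scaling observation and then simply cites Results~3 and~7 of \cite{o2014some} for the moments of the sample variance, whereas you carry out the fourth-moment enumeration (and sketch the U-statistic alternative) explicitly.
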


\begin{proof}
    The proof includes two steps. The first step is to observe that $\bb{V}[\{\bs{g}_{k, r, i}\}_{i=1}^b]$ will multiply by $b/\bar{b}$ times after the upsampling. The second step is simply tracking the mean and variance of the sample variance, which has well-established results in statistics textbooks. We refer the readers to Results 3 and 7 in \cite{o2014some} for complete proof.
\end{proof}


\begin{remark}
    From the proposition above, it is observed that as the number of samples $b$ increases, the expected sample variance increases, while the variance of sample variance decreases. Although computing the specific values of $\mu_2$ and $\mu_4$ is prohibitively expensive, it reveals that there could be some ``lucky draws" when slowly adding new samples into the batch, while a small batch can provide a large variance reduction in the upsampled batch.
\end{remark}



\Cref{prop:var_svar} implies that the best choices of actual batchsize $b$ may be varying during the training. This is because model parameters $\bs{w}_r$ are different at each round, inducing different distributions of $\{\|g\|_i\}$ with corresponding values of $\mu_2$ and $\mu_4$. However, computing $\mu_2$ and $\mu_4$ explicitly is again prohibitively expensive, so an efficient way of finding the best batch size is using AI-in-the-loop control. Specifically, it lets the model decide when the best time to stop collecting new data is so the algorithm involves the model continually inspecting the output intermediates and adjusting its behaviors. To facilitate this, an adaptive threshold is set to record the computed variance of gradients. When a round starts, the model collects a small batch, calculates the expected variance reduction, and compares it with an adaptive threshold $\theta_r$ that updates using an exponential moving average. If above the threshold, the batch is considered ``lucky," and no more new samples will be added to this batch. Then, the batch will be upsampled to size $\bar{b}$ and ready for backpropagation, while the threshold will be lifted a little based on how far $v_{\bar{b}}$ overshot the threshold. Otherwise, the model will try to acquire a new data sample, add it to the batch, and repeat the comparison. If the model collects $\bar{b}$ samples and still does not find a ``lucky draw", the batch will be ready for backpropagation, and the threshold will be accordingly lowered, as the model itself is in the control loop. So, we name it AI-in-the-loop sensing control. 

Combining the sensing control algorithm and the data collection strategy, we present a simple but proven effective algorithm as summarized in \Cref{alg:boot}. It not only reduces the variance of gradient norms as much as possible by altering the data distribution to the optimal data distribution required as in \Cref{lm:varred} but also releases the sensing workload by collecting as few samples as possible.

\begin{algorithm}
\caption{Sensing control with sample gradient reweighting}
\label{alg:boot}
    \begin{algorithmic}
    \REQUIRE Minimum batch size $\underline{b} \geq 1$, maximum batch size $\bar{b} \geq 1$, exponential smooth factor $0 \leq \alpha \leq 1$, previous sample variance threshold $\bar{\theta} \geq 0$.
    \ENSURE Gradient batch $\bs{g}_r = (\bs{g}_1, \dots, \bs{g}_{\bar{b}})$, sample variance threshold $\bar{\theta}$. 
    \STATE Uniformly sample a small batch $\mc{D} \leftarrow [\bs{d}_{1}, \dots, \bs{d}_{\underline{b}}]$. 
    \STATE Calculate the gradient batch $\bs{g}^b \leftarrow [\bs{g}_1, \dots, \bs{g}_{\underline{b}}]$.
    \STATE Calculate the sample gradient norms $n_g \leftarrow [g_1, \dots, g_{\underline{b}}]$. 
    \STATE $\theta \leftarrow S^{n=\underline{b}} (\bs{n}_g)$. 
    \STATE Current batch size $b \leftarrow \underline{b}$.
    \WHILE{$b < \bar{b}$ \OR $b \theta / \bar{b} < \bar{\theta}$}
    \STATE Uniformly sample one sample $\bs{d}_{b+1}$ and calculate its gradient $\bs{g}_{b+1}$.
    \STATE Compute sample gradient norm $g_{b+1}$.
    \STATE $\bs{g}_r \leftarrow \text{concat}(\bs{g}^b, [\bs{g}_{b+1}])$. 
    \STATE $\bs{n}_g \leftarrow \text{concat}(\bs{n}_g, [g_{b+1}])$
    \STATE $b \leftarrow b + 1$.
    \STATE $\theta \leftarrow \bb{S}^b (n_g)$. 
    \ENDWHILE
    \STATE Compute sampling probability $q = n_g / \sum {n_g}$.
    \STATE Resample $\bs{g}^b$ according to $q$ to batch size of $\bar{b}$.
    \STATE $\bar{\theta} \leftarrow \alpha \theta + (1 - \alpha) \bar{\theta}$.
    \RETURN $\bs{g}_r, \bar{\theta}$
    \end{algorithmic}
\end{algorithm}

\begin{remark}
    By setting $\bar{b}$ as the desired batch size $b_r$ required in \cref{P2}, it guarantees that the algorithm will always decrease the number of actual data samples (before upsampling) needed for the training. Moreover, the algorithm introduces only a tiny additional computational overhead for repeatedly calculating the sample variance of the gradient norms, while none of the forwarded gradients will be discarded.
\end{remark}


\section{SGLD-inspired Gradient Noise Control for Validation Loss Optimization}\label{sec:RAVLO}

In the preceding section, with Problem~\ref{P3} optimizing $\{b_r', \bs{q}_r\}$ addressed by an AI-in-the-loop sensing control protocol, we target Problem~\ref{P4}: to minimize the validation loss with a data distribution given. To this end, we will bring the empirical loss and the generalization error under the same bound and make attempts to lower this bound by allocating $\{c_r, b_r\}$ over rounds.

\subsection{Problem Transformation}
First of all, a unified upper bound of the validation loss is derived as follows. Based on~\Cref{prop:norm_grad}, given \Cref{ass:l_continuous}, \ref{ass:first_moment_lim} and \ref{ass:second_moment_lim}, taking a sufficiently small $\eta$ establishes
  \begin{align}
    \bb{E}[F(\bs{w}_r) - F(\bs{w}_{r-1})] \leq -\frac{\eta \mu_G}{2} \bb{V}[\bs{g}_r] + \frac{L \tau_r \eta^2}{2K}.\label{eq:var_grad}
  \end{align}
which offers an upper bound of empirical loss reduction in terms of gradient variance.
Next, combining \Cref{lm:gen_error} and~\eqref{eq:var_grad} yields a tractable bound of $J_r$ in \ref{P2} as follows.


\begin{theorem}[Upper bound of one-round objective]\label{thm:J_r}
  $J_r$ is upper bounded by
    \begin{align}\label{eq:expandJ}
        J_r \leq & -\delta \eta \mu_F \gamma_{r-1} + \frac{L p_n \eta^2}{2K c_r} + \frac{LM_v \eta^2}{2 K b_r}  + \frac{\gamma_{r-1} \sigma c_r}{B K p_n} + C \overset{\triangle}{=} \bar{J}_r, 
    \end{align}
    with
    \begin{align}
        C = \frac{L M_e \eta^2}{2} + \frac{\sigma \eta}{2 B K^2 \mu_G} + \frac{L \sigma \eta^2}{2 B K^2}. \notag
    \end{align}
    given all the contraints introduced by \Cref{prop:norm_grad}, \eqref{eq:var_grad} and all assumptions satisfied.
\end{theorem}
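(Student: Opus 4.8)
The plan is to prove the one-round bound by decomposing $J_r = \Delta F_r + \Delta G_r$ and bounding the two increments separately: Proposition~\ref{prop:norm_grad} already controls the empirical-loss increment $\Delta F_r$, while Lemma~\ref{lm:gen_error} controls the generalization-error increment $\Delta G_r$. The two contributions will account for disjoint groups of terms in $\bar{J}_r$, so I would assemble $\bar{J}_r$ by simply adding them.

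First, for the empirical part, I would start from the descent bound of Proposition~\ref{prop:norm_grad},
$$\mathbb{E}[F(\bs{w}_r)] - F(\bs{w}_{r-1}) \le -\tfrac{\eta\mu_F}{2}\|\nabla F(\bs{w}_{r-1})\|_2^2 + \tfrac{L\tau_r\eta^2}{2K} + \tfrac{L\eta^2}{2}\left(M_e + \tfrac{M_v}{Kb_r}\right),$$
and apply the Polyak--\L{}ojasiewicz inequality (Assumption~\ref{ass:plineq}) to replace $\|\nabla F(\bs{w}_{r-1})\|_2^2$ by its lower bound $2\delta\gamma_{r-1}$, turning the leading term into $-\delta\eta\mu_F\gamma_{r-1}$. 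Substituting the channel-noise relation of the communication model (which ties the injected noise variance $\tau_r$ to the denoising factor $c_r$ and the channel-noise power $p_n$) converts $\tfrac{L\tau_r\eta^2}{2K}$ into $\tfrac{Lp_n\eta^2}{2Kc_r}$. This step alone yields the first three explicit terms of $\bar{J}_r$ together with the $\tfrac{LM_e\eta^2}{2}$ piece of $C$.

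Second, for the generalization part, I would take the round-$r$ term of the telescoped bound in Lemma~\ref{lm:gen_error}, i.e.\ $\Delta G_r \le \tfrac{\sigma\eta}{KB}\sqrt{\mathbb{V}[\bs{g}_{k,r,i}]/(K\tau_r b_r)}$, and then eliminate both the gradient variance and the square root. I would upper-bound the per-sample variance $\mathbb{V}[\bs{g}_{k,r,i}]$ by a quantity linear in $\gamma_{r-1}$, combining the second-moment growth bound (Assumption~\ref{ass:second_moment_lim}) and the first-moment control (Assumption~\ref{ass:first_moment_lim}, which is where the $1/\mu_G$ factor enters) with the standard $L$-smoothness consequence $\|\nabla F(\bs{w}_{r-1})\|_2^2 \le 2L\gamma_{r-1}$. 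Re-using the $\tau_r$--$c_r$ relation places a factor $c_r$ inside the radical, so its argument becomes proportional to $\gamma_{r-1}c_r$. A Young/AM--GM inequality $\sqrt{PQ}\le\tfrac12(P+Q)$, applied with $P\propto\gamma_{r-1}c_r/p_n$ and $Q$ the residual constant, then linearizes the radical into the cross-term $\tfrac{\gamma_{r-1}\sigma c_r}{BKp_n}$ plus the two remaining constants $\tfrac{\sigma\eta}{2BK^2\mu_G}$ and $\tfrac{L\sigma\eta^2}{2BK^2}$ that are collected into $C$. Adding this to the bound on $\Delta F_r$ reproduces $\bar{J}_r$.

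The main obstacle is this second step: the square root couples the unknown gradient variance with the tunable noise level, and extracting exactly the stated term linear in $\gamma_{r-1}c_r$ requires calibrating both the variance-to-$\gamma_{r-1}$ bound and the Young splitting parameter so that the residual lands precisely on the constants in $C$. The algebra is routine once the splitting is fixed, but getting the factors of $K$, $B$, $\mu_G$, and $\eta$ to match the claimed values is delicate; I would verify it by factoring $\tfrac{\sigma}{BK}$ out of all generalization terms and matching coefficients against the target.
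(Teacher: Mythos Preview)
Your decomposition $J_r=\Delta F_r+\Delta G_r$ and the treatment of $\Delta F_r$ (Proposition~\ref{prop:norm_grad} followed by the PL inequality, then the $\tau_r\leftrightarrow c_r$ substitution) match the paper exactly, and the idea of linearizing the square root in the $\Delta G_r$ bound with an AM--GM step is also what the paper does.

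The one place where your plan diverges is the variance bound inside Lemma~\ref{lm:gen_error}. You propose to control $\mathbb{V}[\bs{g}_{k,r,i}]$ by combining Assumption~\ref{ass:second_moment_lim} with the smoothness consequence $\|\nabla F(\bs{w}_{r-1})\|_2^2\le 2L\gamma_{r-1}$. That route would produce a bound of the form $M_v+2LM_V\gamma_{r-1}$, with no $\eta$ in the denominator and no clean $1/\mu_G$ factor, so the resulting constants would not land on the $C$ stated in the theorem (this is exactly the mismatch you flag at the end). The paper instead uses the hypothesis \eqref{eq:var_grad} itself: rearranging it gives
\[
\mathbb{V}[\bs{g}_r]\;\le\;\frac{2}{\eta\mu_G}\bigl(F(\bs{w}_{r-1})-\mathbb{E}[F(\bs{w}_r)]\bigr)+\frac{L\tau_r\eta}{K\mu_G}\;\le\;\frac{2\gamma_{r-1}}{\eta\mu_G}+\frac{L\tau_r\eta}{K\mu_G},
\]
the last step using $F(\bs{w}_{r-1})-\mathbb{E}[F(\bs{w}_r)]\le F(\bs{w}_{r-1})-F^*=\gamma_{r-1}$. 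This is why \eqref{eq:var_grad} is listed among the theorem's hypotheses, and it is what places the $1/(\eta\mu_G)$ and the $\tau_r$-dependent term inside the radical. After that, the paper applies AM--GM exactly as you describe (splitting $\tfrac{1}{K\mu_G b_r}$ against $\tfrac{2\gamma_{r-1}}{\eta\tau_r}+\tfrac{L\eta}{K}$), and the three pieces become $\tfrac{\gamma_{r-1}\sigma c_r}{BKp_n}$ plus the two constants in $C$ after the $\tau_r\mapsto p_n/c_r$ substitution and the harmless relaxation $1/b_r\le 1$. So your outline is right; just swap out the Assumption~\ref{ass:second_moment_lim}/$L$-smoothness route for the rearranged \eqref{eq:var_grad} bound and the constants will fall out without any calibration.
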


\begin{proof}
    See \Cref{app:j_r}.
\end{proof}
Based on \Cref{thm:J_r}, the objective function of \ref{P1} can be approximated by an explicit function $\bar{J}_r$. Furthermore, the set of constraints can be further simplified using the following conclusion.

\begin{lemma}[Extended from \cite{liu2022toward}]
    Sensing power exceeding a threshold can guarantee the quality of the data. Then it can be verified that constraints \Cref{c:latency} - {\rm (\ref{c:power})} reduce to \begin{align}\label{eq:reduced}
      \sum_{r \in \mc{R}} b_r &\leq \min\left\{\frac{T^{\max} - t R T_1}{T_0 + \frac{\nu}{\phi}}, \frac{E^{\max} - \sum_r t T_1 \frac{c_r}{|\underline{h}|^2}}{P_{\s{s}}^{\min}T_0 + \theta \nu \phi^2}\right\} \notag \\
      &\overset{\triangle}{=}Q(\{c_r\}). 
    \end{align}
    It is observed that increasing $b_r$ always leads to a $\bar{J}_r$. Since the energy for sensing and computation are generally linear to $\sum_r b_r$, reserving more of them by reducing energy for communication is always beneficial. Hence, we will try to find a scheme that yields good results and has a low communication energy consumption. 
\end{lemma}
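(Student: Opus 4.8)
The plan is to unfold each of the constraints (C1)–(C4) using the explicit per-round latency and energy expressions from Section II, and then to collapse the feasibility region into a single inequality on $\sum_{r}b_r$. First I would substitute $T_r = T_{\s{cp},r}+T_\s{cm}+T_{\s{s},r} = b_r\left(T_0+\frac{\nu}{\phi}\right)+tT_1$ into the latency budget (\ref{c:latency}). Summing over the $R$ rounds gives $\sum_r b_r\left(T_0+\frac{\nu}{\phi}\right)+RtT_1 \le T^{\max}$, which rearranges at once to $\sum_r b_r \le \left(T^{\max}-tRT_1\right)/\left(T_0+\frac{\nu}{\phi}\right)$, the first argument of the minimum.

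For the energy budget (\ref{c:energy}) I would invoke the threshold hypothesis: since data quality is guaranteed whenever the sensing power exceeds a fixed threshold, and since the sensing and computation energies $E_{\s{s},r}=T_0 b_r p_\s{s}$ and $E_{\s{cp},r}=\theta\nu(\phi_{k,r})^2 b_r$ are both increasing in $p_\s{s}$ and $\phi_{k,r}$, it is energy-optimal to drive the sensing power down to its minimal admissible value $p_\s{s}=P_\s{s}^{\min}$ allowed by (\ref{c:power}) and to fix the CPU frequency at the common $\phi$. This disposes of (\ref{c:power}) and renders the sensing-plus-computation energy linear in $b_r$ with coefficient $P_\s{s}^{\min}T_0+\theta\nu\phi^2$. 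I would then dispatch the outer $\max_k$ by observing that, once $p_\s{s}$ and $\phi$ are fixed, only the uploading term $E_{\s{cm},k,r}=tT_1 c_r/|h_{k,r}|^2$ carries the device index, so a conservative worst case is reached at the weakest channel $|\underline{h}|^2=\min_{k,r}|h_{k,r}|^2$, which upper bounds every $E_{\s{cm},k,r}$ uniformly across rounds and lets me pull $|\underline{h}|^2$ outside the sum. Substituting and rearranging then yields $\sum_r b_r \le \left(E^{\max}-\sum_r tT_1 c_r/|\underline{h}|^2\right)/\left(P_\s{s}^{\min}T_0+\theta\nu\phi^2\right)$, the second argument; intersecting the two half-spaces produces $\sum_r b_r \le \min\{\cdot,\cdot\}=Q(\{c_r\})$, while (\ref{eq:C3}) survives merely as a box constraint on the admissible range of each $c_r$.

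Finally, for the monotonicity remark I would inspect $\bar{J}_r$ from \Cref{thm:J_r}. The only $b_r$-dependent contributions are $\frac{LM_v\eta^2}{2Kb_r}$, which is strictly decreasing in $b_r$, together with the terms carrying $B=\sum_r b_r$ in the denominator (inside $C$ and in $\frac{\gamma_{r-1}\sigma c_r}{BKp_n}$), which likewise decrease as any $b_r$ grows. Hence $\bar{J}_r$ is non-increasing in each $b_r$, so the bound is minimized by saturating the reduced constraint, i.e.\ pushing $\sum_r b_r$ up to $Q(\{c_r\})$; because the sensing-plus-computation energy is linear in $\sum_r b_r$, every unit of budget diverted away from communication directly enlarges the admissible $\sum_r b_r$, which is precisely what motivates seeking a low-communication-energy allocation.

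I expect the main obstacle to be bookkeeping in the energy step rather than any deep argument: one must carefully justify that fixing $p_\s{s}=P_\s{s}^{\min}$ and $\phi_{k,r}=\phi$ is without loss of optimality (it only relaxes the energy budget while preserving data quality), and that the $\max_k$ commutes with the summation so that the worst channel $|\underline{h}|^2$ can be extracted as a single constant valid for all rounds, turning the exact constraint into a tractable sufficient surrogate. The coupling between the individual $b_r$ and the aggregate $B$ in $\bar{J}_r$ also needs the short monotonicity check above to make the closing ``reserve energy for sensing'' conclusion rigorous.
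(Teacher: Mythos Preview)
Your proposal is correct and, in fact, more detailed than anything the paper supplies: the paper does not prove this lemma at all. It is stated as ``Extended from \cite{liu2022toward}'' and taken for granted, with no accompanying \texttt{proof} environment; the text moves directly from the lemma to the reformulated problem~\ref{P4}. Your derivation---expanding $T_r$ and summing for the latency bound, pushing $p_\s{s}\downarrow P_\s{s}^{\min}$ under the data-quality hypothesis, taking the worst channel $|\underline{h}|^2$ to remove the $\max_k$, and then reading off the $b_r$-monotonicity of $\bar{J}_r$ from \Cref{thm:J_r}---is exactly the intended bookkeeping that the citation is meant to cover, so there is no alternative approach in the paper to compare against.
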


Using the above results, a tractable reformulation of \ref{P3} is
\begin{equation}\leqnomode\tag{P4}\label{P4}
\begin{aligned}
    \min\limits_{\{c_r, b_r\}} &\quad \sum_{r \in \mc{R}} \bar{J}_r(c_r, b_r)\\
    \text{s.t.} &\quad \sum_{r \in \mc{R}} b_r \leq Q(\{c_r\}).
\end{aligned}
\end{equation}
\subsection{Optimal Communication Resource Allocation}
The new problem~\ref{P4} reveals the convexity property, allowing for an optimal closed-form solution as follows.

First, it is observed that for any $b_r$ sequence satisfying $\sum_{r \in \mc{R}} b_r \leq Q(\{c_r\})$, the lowest value of $\bar{J}_r$ is achieved by setting \begin{align}
    c_r^{\star} = \sqrt{\frac{B L p_n^2 \eta^2}{2 \sigma \gamma_{r-1}}}. \notag
  \end{align}
This can be simply proved by setting $\partial \bar{J}_r / \partial c_r = 0$ with verifying that $\partial^2 \bar{J}_r / \partial c_r^2$ monotonically increases and being negative at some $c_r > 0$. 
{\color{black}\begin{remark}
  From the optimal power control $c_r^{\star}$, it gives
  \begin{align}
    \frac{c_r}{c_0} = \sqrt{\frac{\gamma_0}{\gamma_{r-1}}}, \notag
  \end{align}
  that suggests {\color{black}a more noisy gradient should be used at the beginning phase of the training to avoid high bias towards the training data, leaving room for communication overhead reduction in the AirComp system. The gradient should be gradually more precise in order to reach convergence}. 
\end{remark}}

Then, since $c_r^{\star} = \sqrt{\frac{B L p_n^2 \eta^2}{2 \sigma \gamma_{r-1}}}$ is independent of $b_r$ and $b_r$ only enters into the third term of $J_r$ in~\eqref{eq:expandJ}, the problem of solving $\{b_r\}$ can be equivalently re-expressed from~\ref{P4} and~\ref{P1} as
\begin{align}
    \min\limits_{\{b_r\}} &\quad \sum_{r \in \mc{R}}\frac{LM_v \eta^2}{2 K b_r}\\
    \text{s.t.} &\quad \sum_{r \in \mc{R}} b_r \leq Q(\{c_r\}). \notag
\end{align}

Obtaining the optimal solution of $b_r$ for round $r$ is impossible since it requires the parameters involved in future rounds. 
{\color{black}However, we can view this problem as redistributing $Q(\{c_r\})$ into different rounds to minimize a reciprocal $b$ term, a good tactic should be letting $b_r = b$,}
where $q$ is a parameter combining all of the system-specific parameters, e.g., Lipschitz constant $L$, first moment limits parameter $\mu_F$, learning rate $\eta$, etc. The parameter $q$ governs how inaccurate the gradients can be (allowing a lower communication overhead) that disturbs the aggregated gradients, thus making the scheme general to different problems theoretically.

\section{Simulations} \label{sec:ER}

\begin{figure*}[t]
\centering
\subfloat[Versus the number of elapsed steps.]{
		\includegraphics[width=0.48\textwidth]{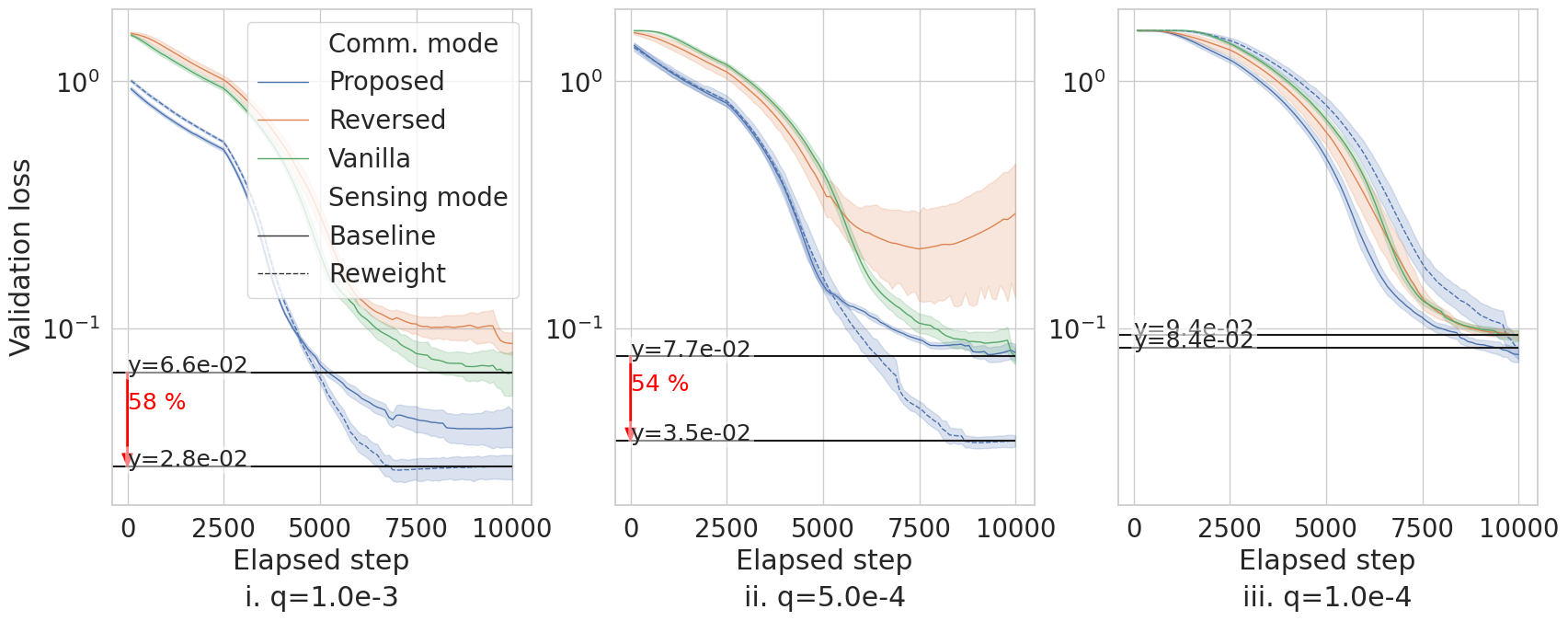}\label{fig:vloss_steps_radar}}
\hfill
\subfloat[Versus the number of used data samples.]{
		\includegraphics[width=0.48\textwidth]{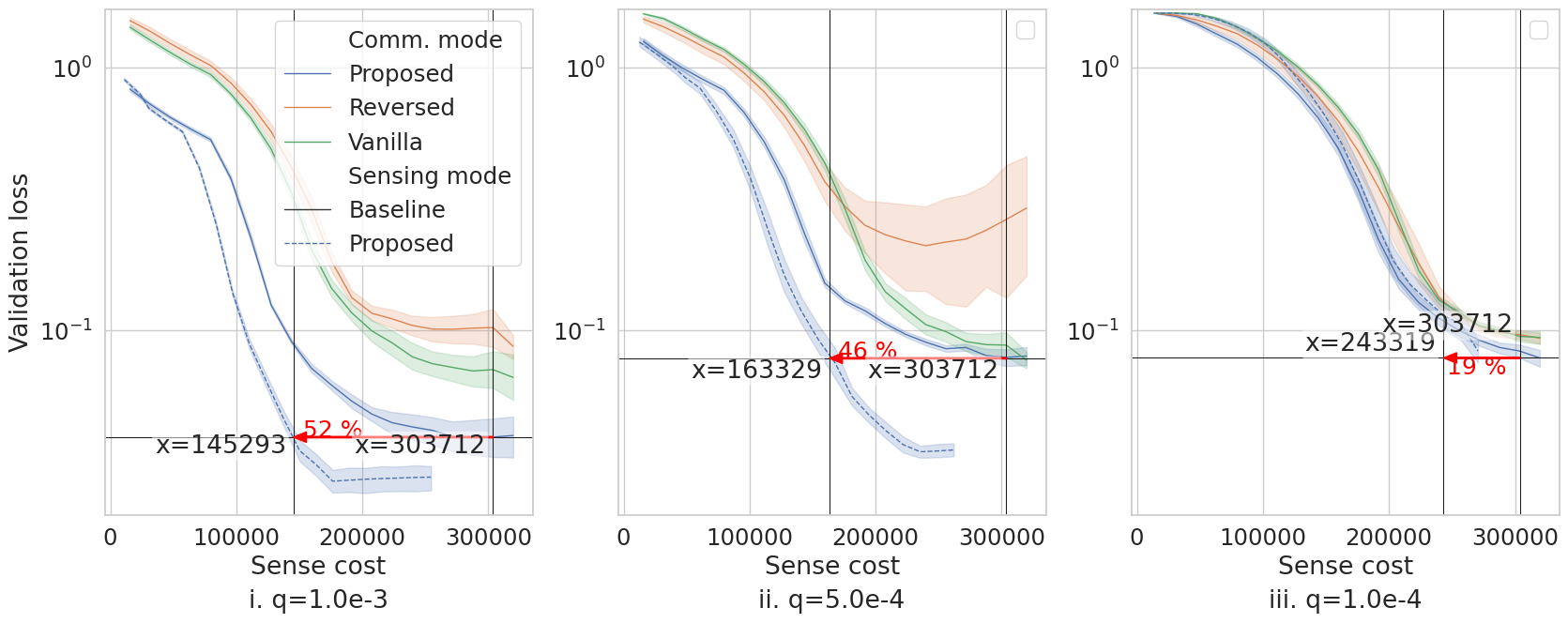}\label{fig:vloss_sensecost_radar}}
\caption{Validation curves on radar dataset.}
\label{fig:vloss_commcost}
\vspace{-10pt}
\end{figure*}

\begin{figure*}[t]
\centering
\subfloat[Radar dataset.]{
		\includegraphics[width=0.48\textwidth]{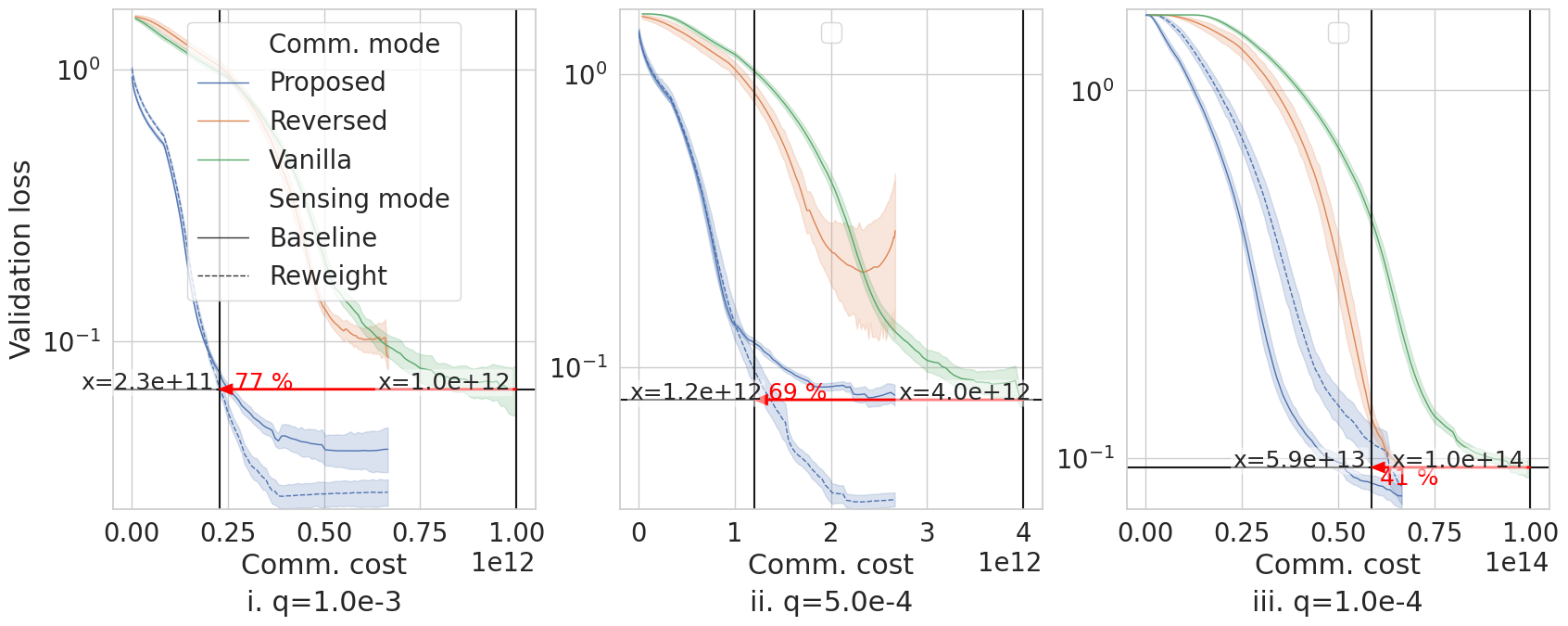}\label{fig:vloss_comm_radar}
        }
\hfill
\subfloat[MNIST dataset.]{
		\includegraphics[width=0.48\textwidth]{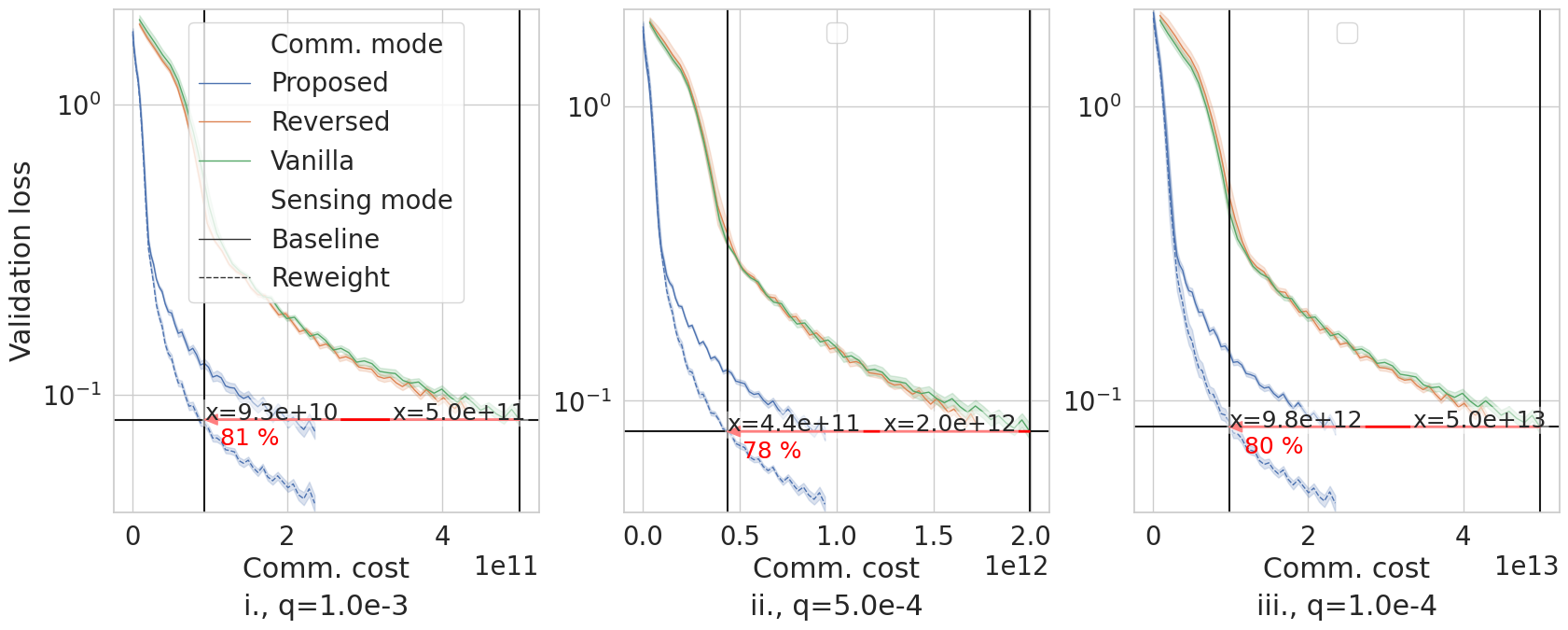}\label{fig:vloss_comm_mnist}
        }
\label{fig:vloss_commcost}
\caption{Validation loss versus communication cost.}
\end{figure*}

\begin{figure}[t]
\centering
\includegraphics[width=0.48\textwidth]{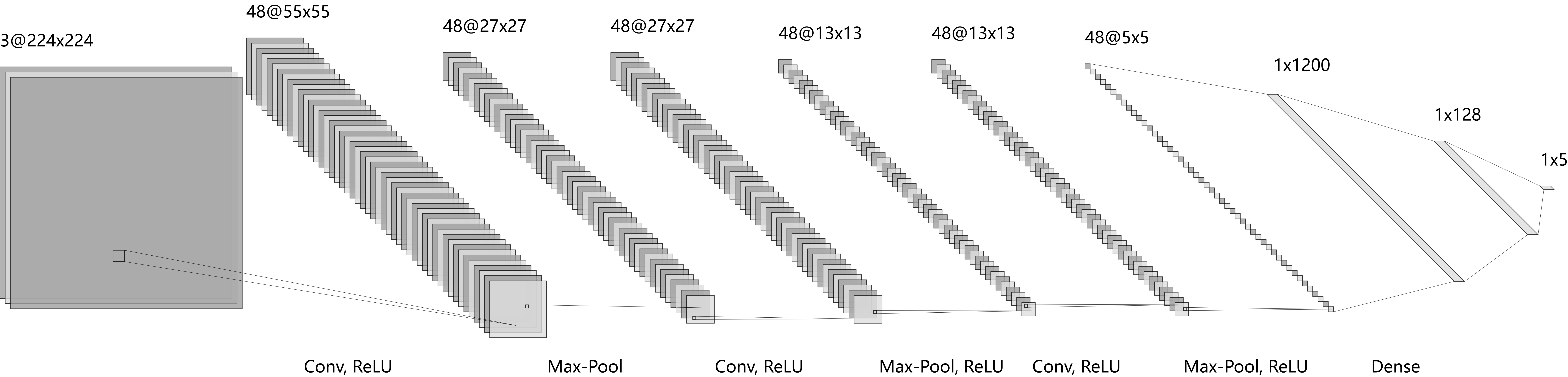}
\vspace{-10pt}
\caption{Architecture of LargeNet.}
\label{fig:largenet}
\end{figure}

In this section, we present the simulation results to validate the proposed resource allocation scheme and the sample gradient reweighting method. To fully assess the proposed scheme's performance, we use two different case studies: one uses a handwritten digits classification task, and the other uses a simulated JSAC simulation scenario with an AI-based human motion classification task.
This experiment considers an Air-FedSGD system with an edge server and $K = 5$ JSAC devices. We adopt the SGD optimizer with no momentum with learning rate $\eta = 0.01$ and fixed batch size. For each different setting, loss curves are plotted based on $5$ independent repeated experiments. For experiments considering Algorithm \ref{alg:boot}, we set $\alpha = 0.1$ and $\underline{b} = 4$.

\subsection{Proposed Methods and Baselines}

Understandably, $q$ is a parameter strongly related to the properties of $F$, so we evaluate the power control methods under different values of $q$ to prove its versatility. We first define and analyze communication energy consumption in energy units, then bridge the analysis with real-world to actual parameters. 

To verify the advantage of the JSAC resource allocation scheme, we define the following communication energy budget distributed schemes, including a proposed one and three baselines: \begin{itemize}
    \item \textbf{\textit{Proposed}}: The communication energy budget is distributed according to $c_r = p_n \sqrt{r} / \sqrt{q}$.
    \item {\it Vanilla}: The communication power is set to $c_r = p_n \sqrt{R} / \sqrt{q}$ so that the communication budget caused by environmental noise is averagely distributed to different rounds, simulating a vanilla SGLD algorithm deployed in a distributed manner.
    \item {\it Reversed}: The communication energy budget is distributed according to $c_r = p_n \sqrt{R-r} / \sqrt{q}$. This scheme is added to rule out the possibility that the performance gain is from varying levels of noise while assuring that the total communication cost is equal.
\end{itemize}

Moreover, to verify the advantage of the sample gradient reweighting technique, we define the  \textbf{\textit{Reweight}} sensing scheme that follows Algorithm \ref{alg:boot} and the \textit{Baseline} that fixes the number of data samples in each step.

We run $5$ repeats for each scheme under identical random seed and model hyperparameters.

\subsection{Evaluation Metrics}

The objective is the validation loss, measured on the test set with no intersection of the training set. The communication cost at the $r$-round is measured by the accumulated number of times of unit energy, which is defined as $u_{r'} = c_{r'} t T_1 / p_n = t T_1 \tau_{r'}^{-1}$. A higher metric implies that less overall communication noise is allowed throughout the training process. The sensing cost at the $r$-th round is measured by how many samples are used in total from the beginning of training until the $r$-th steps.

\subsection{Human Motion Classification Task}

\subsubsection{Settings}

We assume the devices, in this case, are JSAC devices with an FMCW sensor that generates the dataset. The FMCW radar operates at center frequency $60$ GHz, with a bandwidth of $10$ MHz, with variance of shadow fading $8$ dB and noise power spectral density $-174$ dBm/Hz. The chirp duration is $10 \mu$s, the sampling rate is $10$ MHz, and the number of chirps per frame is $25$. Data samples of $5$ different classes will be captured, i.e., child walking, child pacing, adult walking, adult pacing, and standing. Following \cite{liu2022toward}, we adopt a prototype-based wireless sensing simulation scheme \cite{ram2010simulation}. We utilize the open-sourced toolbox\footnote{We maintain the wireless environment setting in Sec. VI in \cite{liu2022toward}, and following the paper, using \url{https://github.com/PeixiLiu/humanMotionRadar}. Some examples can be found in the paper.} to generate spectrograms in RGB images as data samples and downsized them to $224$ by $224$ pixels.

Since the samples are larger pictures with colors, we use a larger AI model to discriminate them. Each device runs a LargeNet\footnote{Some commonly used operations like \texttt{BatchNorm}, smashes the sample-wise gradient information, hence making tracking the sample gradient during training impossible. Also, tracking sample-wise gradient is time and computation-consuming, so we only focus on AI models with simpler architectures.}, as shown in \Cref{fig:largenet}.





Fixing $B = 320000$\footnote{For clear visualization of the results, results are recorded once every $10$ step and are smoothed using a linear kernel of length $100$ with bias corrections.}. An exhaustive search on powers of $2$ between $1$ and $1000$ as batch size yields $32$ as the best choice of $b_r$ or $\bar{b}$ for the baseline sensing scheme, so the maximum batch size is set to $32$ as well in the reweight sensing scheme. 

\subsubsection{Results}
Some observations on the simulation results are as follows.


{\bf The proposed scheme yields the best generalizable models with less communication cost.} First of all, it is clear that to achieve a much lower parameter noise, the communication energy consumed will be much higher. This implies a natural approach of allowing noise of certain amplitude in the gradients without affecting learning performance much, which coincides with the findings of model quantization \cite{zhu2020one}. {\it Reversed} is designed to start at the same $c_r$ with the {\it Vanilla} scheme, then lowers gradually rather than staying constant. The accumulated consumed energy is equal to the proposed scheme but lower than that of the {\it Vanilla} scheme. It can be observed that the proposed method uses up to $77 \%$ less communication energy compared to {\it Vanilla} when reaching the same level of validation loss (\Cref{fig:vloss_comm_radar}, i.). 

{\bf Sample gradient reweighting requires less sensing cost and hits lower validation loss.} Now we take dashed lines into observation. We can observe that from Figure \ref{fig:vloss_sensecost_radar}, the dashed line 
always outperforms the other methods. In the human motion classification experiment, we can even observe that the {\it Reweight} curve is observed to converge at a lower error floor and a lot earlier than the counterparts. From \Cref{fig:vloss_sensecost_radar}, in terms of overall number of samples consumed, we can observe that the {\it Reweight} curve used up around $10-15 \%$ samples than {\it Baseline} curves and reaches up to $58 \%$ (\Cref{fig:vloss_steps_radar}, i.) lower validation loss. However, the {\it Reweight} method took up to $52 \%$ less samples (\Cref{fig:vloss_sensecost_radar}, i.) to reach equal validation loss compared to the {\it Baseline} method. It confirms that the proposed sample gradient reweighting technique can make the Air-FEEL system more sensing efficient.

\subsection{Hand Written Digits Classification Task}

\begin{figure}[t!]
\centering
\includegraphics[width=0.35\textwidth]{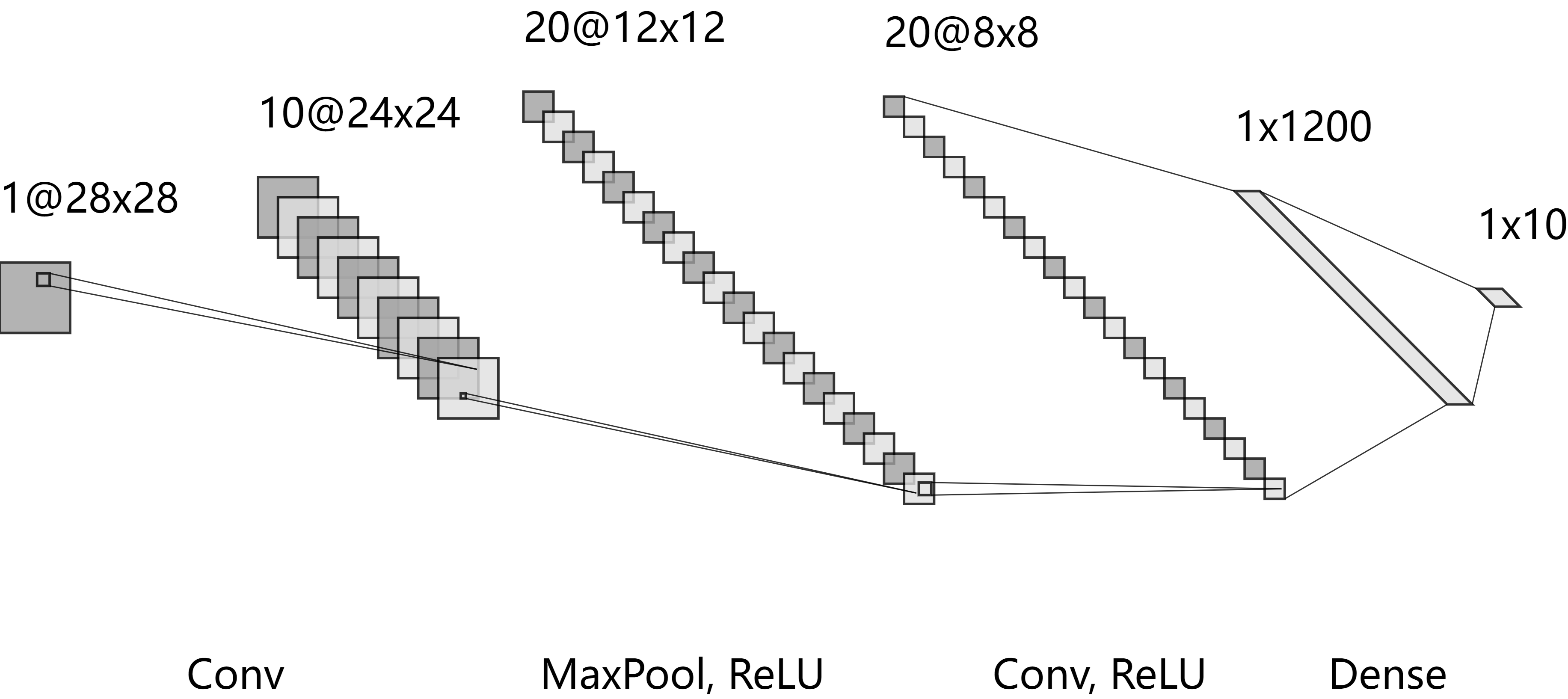}
\vspace{-10pt}
\caption{Architecture of SmallNet.}
\label{fig:smallnet}
\end{figure}



\begin{figure*}
\centering
\subfloat{
		\includegraphics[width=0.48\textwidth]{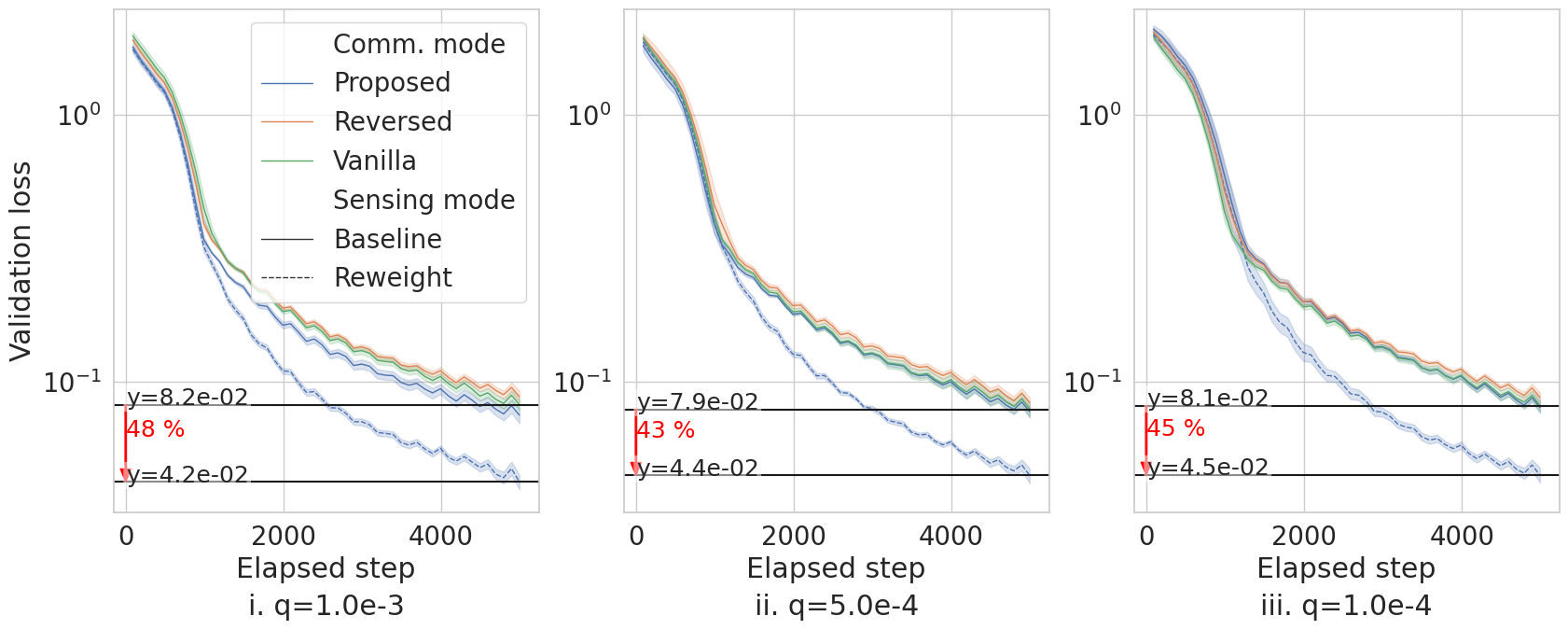}\label{fig:vloss_steps_mnist}}
\hfill
\subfloat{
		\includegraphics[width=0.48\textwidth]{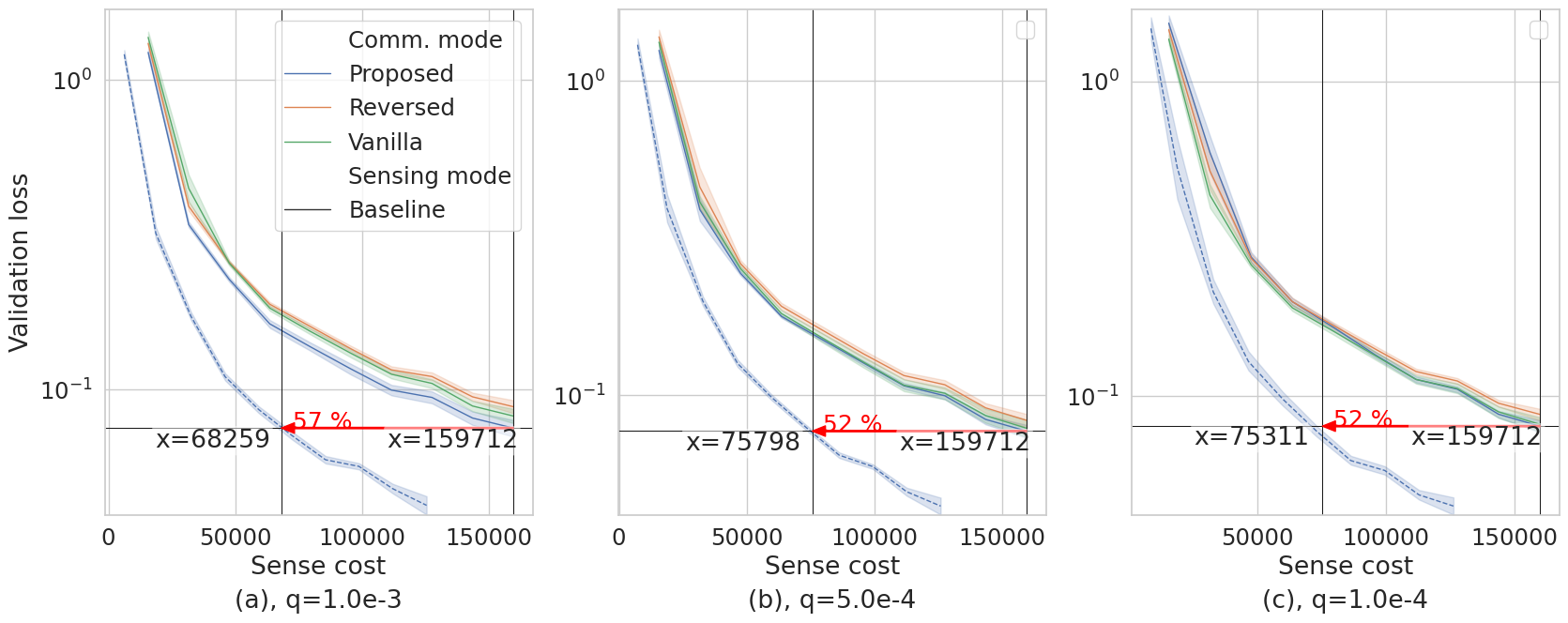}\label{fig:vloss_sensecost_mnist}}
\vspace{-10pt}
\caption{Validation curves on MNIST datasete.}
\label{fig:vloss_commcost}
\vspace{-10pt}
\end{figure*}

\subsubsection{Settings}

We consider training the FEEL system on the famous MNIST dataset \cite{lecun1998gradient} that contains around 70000 handwritten digits with values evenly distributed from 0 to 9. Each of the devices has a copy of the training set and sample data batches independently from it. We include this experiment to show that our result can be generalized to other tasks and model structures.

The devices run a SmallNet, whose structure is shown as \Cref{fig:smallnet}. We fix $B = 160000$, and keep other settings unchanged from that of the previous experiment. 

\subsubsection{Results}

The results agree with the previous observations, with an additional observation can be obtained.

{\bf Easier tasks and smaller models suffer less overfitting.} While the {\it Proposed} scheme still performs better than the other schemes, the gaps are smaller. This is because a smaller model is naturally weaker at carrying information from the dataset, resulting in smaller $I(W; D)$ and smaller overfitting consequently. This observation is aligned with the discussion in \Cref{sec:ACLGE}.

\section{Conclusion} \label{sec:conclusion}

In this paper, we investigated the issue of enhancing the generalizability of edge intelligence using AI-in-the-loop sensing and communication joint control. The solution is two-phased. Firstly, an analysis of how the data distribution and weight distribution as two different aspects impact the generalizability of edge intelligence lays down the foundation for validation loss optimization. Then the problem is split into the data distribution and weight distribution optimization subproblems. The former subproblem is met with a heuristic AI-in-the-loop sensing control algorithm, and the latter is optimized and induces a joint sensing and communication resource allocation scheme. As a byproduct of improving generalizability, the communication and sensing costs are reduced.

This work opens several future directions for further investigation. Firstly, more practical cases, such as heterogeneous JSAC devices, can be considered. Secondly, the trade-off between empirical loss and generalization error is overlooked in this paper. Thirdly, validation loss under other edge AI paradigms heavier AI models, or more complicated structures is also underexplored. Summing up, utilizing AI-in-the-loop control brings more degrees of freedom for resource allocation and joint AI-JSAC designs. Taking generalizability into consideration also opens possible synergy patterns for AI and JSAC systems, and emerges as a promising research direction.

\begin{appendices}
\crefalias{section}{appendix}

\section{Proof to \Cref{prop:norm_grad}}
\label{app:norm_grad}
\begin{proof}
  With \Cref{ass:l_continuous}, we have 
  \begin{align}
    F(\bs{w}) &= F(\bar{\bs{w}}) + \nabla F(\bar{\bs{w}}) (\bs{w} - \bar{\bs{w}}) \notag \\
    &~+ \int_0^1 \left(\nabla F(\bar{\bs{w}} + t(\bs{w} - \bar{\bs{w}})) - \nabla F(\bar{\bs{w}})\right)^T(\bs{w} - \bar{\bs{w}}) {\rm d}t \notag \\
    \leq& F(\bar{\bs{w}}) + \nabla F(\bar{\bs{w}})(\bs{w} - \bar{\bs{w}}) + L\|\bs{w} - \bar{\bs{w}}\|_2^2. \label{eq:--4}
  \end{align}

  Then it follows that  
  \begin{align}
    & \bb{E}[F(\bs{w}_r)] - F(\bs{w}_{r-1}) \notag \\
    \leq& -\eta \nabla F(\bs{w}_{r-1})^T \bb{E}[\bs{g}_r] + \frac{L \eta^2}{2} \bb{E}[\|\bs{g}_r\|_2^2] + \frac{L \tau_r \eta^2}{2K} \label{eq:-3} \\
    \leq& -\eta \mu_F \|\nabla F(\bs{w}_{r-1})\|_2^2 + \frac{L \tau_r \eta^2}{2K} + \frac{L \eta^2}{2} \left(M_e + \frac{M_v}{Kb_r}\right)\notag \\
    & + \frac{L \eta^2}{2} \left( M_E + \frac{M_V}{Kb_r} \right) \|\nabla F(\bs{w}_{r-1})\|_2^2 \label{eq:-2}\\
    \leq& -\frac{\eta \mu_F}{2} \|\nabla F(\bs{w}_{r-1})\|_2^2 + \frac{L \tau_r \eta^2}{2K} + \frac{L \eta^2}{2} \left(M_e + \frac{M_v}{Kb_r}\right), \label{eq:-1}
  \end{align}
  where \eqref{eq:-1} is made possible by taking $0 < \eta \leq \frac{\mu_F}{L\left(M_E + \frac{M_V}{Kb_r}\right)}$.
\end{proof}

\section{Proof to \Cref{lm:gen_error}}
\label{app:var_grad}
\begin{proof}
  Following the line of \cite{xu2017information, Wang2021}, we present proof for the version of the lemma that we used in this paper. We first express the generalization error with in information-theoretic term, then connect this term with the variation of gradients.

  Like \cite{russo2019much}, for the first step, we exploit the Donsker-Varadhan variational representation \cite{donsker1983asymptotic} of the relative entropy. Denote ${\rm D}(\cdot || \cdot)$ the Kullback-Liebler divergence operator, where ${\rm D}$ is in roman font. Fix two probability measures $P$ and $Q$ on a measurable space $\Omega$. Denote the set $\mc{F}$ to be the set of functions $f$ that satisfies $f : \Omega \to \bb{R}$ and $\bb{E}[\exp(f)] < \infty$. Then when ${\rm D}(P || Q) < \infty$, $\bb{E}_P[f]$ exists and \begin{align}
    {\rm D}(P || Q) = \sup_{f \in \mc{F}} \left\{\bb{E}_P[f] - \log \bb{E} [\exp(f)] \right\}.
  \end{align}

  Substitute $P$ with $P_{W, D_n}$ and $Q$ with $P_{W} P_{D_n}$, when $f$ is some measurable sample loss function, where $W$ is a weight random variable and $D_n$ a data random variable of some index $n$, and $P_{W, D_n}$ is defined by some learning algorithm. Denote $\bar{D}_n$ an identically independent copy of $D_n$, then with $P_W = P_{\bar{W}}$ and $P_{W, D_n} = P_{\bar{W}, \bar{D}_n}$, for any $\lambda \in \bb{R}$, it gives \begin{align}
    I(W; D) =& {\rm D}(P_{W, D} || P_{W} P_{D}) \notag \\
    \geq& \bb{E}[\lambda f(W, D_n)] - \log \bb{E}\left[\exp(\lambda f(W, \bar{D}_n))\right] \notag \\ 
    \geq& \lambda \left(\bb{E}[f(W, D_n)] - \bb{E}[f(W, \bar{D}_n)]\right) - \frac{\lambda^2 \sigma^2}{2} \notag \\
    \geq& 0,
  \end{align}
  which defines a nonnegative parabola in $\lambda$. The quadrative discriminant implies \begin{align}
    \left|\bb{E}[f(W, D_n)] - \bb{E}[f(W, \bar{D}_n)]\right| \leq \sqrt{2 \sigma^2 I(W; D_n)}. \notag
  \end{align} 
  By substituting the data random variable $D_n$ to dataset random variable $D$, and sample loss function $f$ to its averaged version $\hat{F}$ over every sample in the set, defined as \begin{align}
    \hat{F}(W, D) = \frac{1}{b} \sum_{i=1}^b f(W, D_i), \notag
  \end{align} it gives \begin{align}
    |\gen(W, D)| =& \left|\bb{E}[\hat{F}(W, D_n)] - \bb{E}[\hat{F}(\bar{W}, \bar{D}_n)]\right| \notag \\ 
    \leq& \sqrt{\frac{2 \sigma^2}{B} I(D, W)},
  \end{align}
  Where $B$ is the size of the dataset $D$, resulting in $\hat{F}$ being a $\sigma / \sqrt{B}$-subgaussian function.

  For the second step, denoting $\mc{D}^{(R)} = \{\mc{D}_r\}_{r \in \mc{R}}$, where $\mc{D}_{r} = \{\mc{D}_{k, r}\}_{k \in \mc{K}}$ and $\mc{D}_{k, r} = \{\bs{d}_{k, r, i}\}_{i=1}^{b_r}$, we have \begin{align}
    &I(W_r; D^{(r)}) \notag \\
    \leq& I(W^{(r)}, D^{(r-1)}; D^{(r)}) \label{eq:sdpi} \\
    =& I(W_r; D_r | W^{(r-1)}, D^{(r-1)}) + I(W^{(r-1)}, D^{(r-1)}; D_r)~\label{eq:chain_rule} \\ 
    =& \sum_{r'=1}^{r} I(W_{r'}; D_{r'} | W^{(r'-1)}, D^{(r'-1)}), \label{eq:summing}
  \end{align}
  where \eqref{eq:sdpi} is by the \emph{strong data processing inequality} and \eqref{eq:chain_rule} by the chain rule of mutual information. \eqref{eq:summing} is by repeating the chain rule, with the last residual $I(\bs{w}_0, \varnothing; \mc{D}_1)$ equals to $0$ as long as the model is trained from scratch and batches of data are taken randomly. 
  
  Since the model is considered to be continuously fed with the newest collected batches of data, we assume that almost surely that $\mc{D}_r \cap \mc{D}_{r'} = \varnothing$ for $r \neq r'$.

  Observing that \begin{align}
    &I(W_r; D_r | W^{(r-1)}, D^{(r-1)}) \label{eq:desired_term} \\
    =& I\left(\frac{\eta}{Kb_r} \sum_{k \in \mc{K}}\sum_{i=1}^{b_r}\nabla f(\bs{w}_{r-1}, \bs{d}_{k, r, i}) + \sqrt{\frac{\tau_r}{K}} \bs{n}; \mc{D}_r \right| \notag \\
    &\quad \left. W^{(r-1)} = \bs{w}^{(r-1)}, D^{(r-1)} = \mc{D}^{(r-1)} \right), \label{eq:mi_grad}
  \end{align} where $\bs{n} \sim \mc{N}(\bs{0}, \bs{I})$.

  From properties of mutual information, \begin{align}
    I(t(X) + N; X) =& {\rm D}(P_{t(X) + N} || P_N | P_X) - {\rm D}(P_{t(X) + N} || P_N) \notag \\
    \leq& {\rm D}(P_{t(X) + N | X} || P_N | P_X) \notag \\
    =& {\rm D}(P_{t(X) + N | X} || P_N) = \frac{\bb{E}[\|t(X)\|]_2^2}{2}. \notag 
  \end{align}

  Set an affine transform $t = \frac{\nabla - \bb{E}\nabla}{\sqrt{\bb{V}}}$ to make the left-most term in \eqref{eq:mi_grad} a uniform gaussian random variable and with proper arrangement, \eqref{eq:desired_term} evolves into \begin{align}
    I(W_r; D_r | W^{(r-1)}, D^{(r-1)}) = \frac{\eta^2}{2K b_r^2 \tau_r}\bb{V}[\bs{g}_r]. \label{eq:single_term}
  \end{align}

  Note that $\sum_{k \in \mc{K}}\sum_{i=1}^{b_r} f(\bs{w}_{r-1}, \bs{d}_{k, r, i})$ is a $\sigma / \sqrt{Kb_r}$-subgaussian function, plug \eqref{eq:single_term} into \eqref{eq:mi_grad}, we have \begin{align}
      \gen (W_r; D^{(r)}) \leq \frac{\sigma \eta}{K B_r} \sum_{r'=1}^r \sqrt{\frac{\bb{V}[\bs{g}_{k, r', i}]}{K \tau_{r'} b_{r'}}}. \label{eq:summation}
  \end{align}
  However, we always measure the generalization error against the whole dataset. Specifically, we regard $b^{(R)} = [b_1, \dots, b_R]$ as a predetermined series of datasets even if later batches of data are not yet collected. So the denominator $B_r$ is fixed as $B_R$, and we can easily write the increment of the generalization error as the last term of the summation in \eqref{eq:summation}.
\end{proof}

\section{Proof to \Cref{thm:J_r}}
\label{app:j_r}
\begin{proof}
    From \Cref{prop:norm_grad} we have, 
    \begin{align}
        \Delta F_r & = \bb{E}[F(\bs{w}_r)] - F(\bs{w}_{r-1}) \notag \\
        \leq & - \frac{\eta \mu_F}{2} \|\nabla F(\bs{w}_{r-1})\|_2^2 + \frac{L \tau_r \eta^2}{2 K} + \frac{L \eta^2}{2}\left(M_e + \frac{M_v}{Kb_r}\right) \notag \\
        \leq & - \eta \mu_F \delta \gamma_{r-1} + \frac{L \tau_r \eta^2}{2 K} + \frac{L \eta^2}{2}\left(M_e + \frac{M_v}{Kb_r}\right),\label{eq:Jr_part_1}
    \end{align}
    where \eqref{eq:Jr_part_1} is by \Cref{ass:plineq}.
    
    From \Cref{lm:gen_error} combined with \Cref{prop:var_svar},
    \begin{align}
        \Delta G_r = & \gen(\bs{w}_r, \mc{D}^{(R)}) - \gen(\bs{w}_{r-1}, \mc{D}^{(R)}) \notag \\
        \leq & \frac{\sigma \eta}{K B} \sqrt{\frac{1}{K \tau_r b_r} \left(\frac{2 \gamma_{r-1}}{\eta \mu_G} + \frac{L \tau_r \eta}{K \mu_G}\right)} \notag \\
        = & \frac{\sigma \eta}{K B} \sqrt{\frac{1}{K \mu_G b_r} \left(\frac{2 \gamma_{r-1}}{\eta \tau_r} + \frac{L \eta}{K}\right)} \label{ineq:Gn<An} \\
        \leq & \frac{\sigma \eta}{2 K B} \left(\frac{1}{K b_r \mu_G} + \frac{2 \gamma_{r-1}}{\eta \tau_r} + \frac{L \eta}{K}\right), \label{eq:Jr_part_2}
    \end{align}
    where \eqref{ineq:Gn<An} is from average inequality $2 \sqrt{ab} \leq a + b$.

    With \eqref{eq:Jr_part_1} and (\ref{eq:Jr_part_2}), plug in $c_r = p_n / \tau_r$, we can have \eqref{eq:expandJ} by
    \begin{align}
        \bb{E}[J_r] = \Delta F_r + \Delta G_r.
    \end{align}
\end{proof}

\end{appendices}




\ifCLASSOPTIONcaptionsoff
  \newpage
\fi



%



\bibliographystyle{IEEEtran}
\bibliography{ref.bib}

%








\end{document}